\def\eqref#1{equation~\ref{#1}}
\def\1{\bm{1}}
\DeclareMathAlphabet{\mathsfit}{\encodingdefault}{\sfdefault}{m}{sl}
\SetMathAlphabet{\mathsfit}{bold}{\encodingdefault}{\sfdefault}{bx}{n}
\newtheorem{proposition}{Proposition}
\newtheorem{defination}{Definition}
\newtheorem{assumption}{Assumption}
\newtheorem{observation}{Observation}
\newcommand{\model}{\textbf{DeCaf}}
\newcommand{\indep}{\perp \!\!\! \perp}
\newcommand{\rew}[1]{}
\algnewcommand{\IIf}[1]{\State\algorithmicif\ #1\ \algorithmicthen}
\algnewcommand{\EndIIf}{\unskip\ \algorithmicend\ \algorithmicif}
\title{\textbf{DeCaf}: A Causal Decoupling Framework for OOD Generalization on Node Classification}
\author{
  Xiaoxue Han \\
  Stevens Institute of Technology \\
\texttt{xhan26@stevens.edu} \\
  \And
  Huzefa Rangwala \\
 George Mason University\\
  \texttt{rangwala@cs.gmu.edu} \\
   \And
  Yue Ning \\
  Stevens Institute of Technology \\
\texttt{yue.ning@stevens.edu} \\
}
\begin{document}
\maketitle



\newcommand{\fix}{\marginpar{FIX}}
\newcommand{\new}{\marginpar{NEW}}

\newtheorem{property}{Property}[section]

\begin{abstract}
Graph Neural Networks (GNNs) are susceptible to distribution shifts, {creating vulnerability and security issues in critical domains.} There is a pressing need to enhance the generalizability of GNNs on out-of-distribution (OOD) test data. Existing methods that target learning an invariant \textit{(feature, structure)-label} mapping often depend on oversimplified assumptions about the data generation process, which do not adequately reflect the actual dynamics of distribution shifts in graphs. In this paper, we introduce a more realistic graph data generation model using Structural Causal Models (SCMs), allowing us to redefine distribution shifts by pinpointing their origins within the generation process. {Building on this, we propose a \textbf{ca}sual \textbf{de}coupling \textbf{f}ramework, \model{}, that independently learns unbiased \textit{feature-label} and \textit{structure-label} mappings.} We provide a detailed theoretical framework that shows how our approach can effectively mitigate the impact of various distribution shifts. We evaluate \model{} across both real-world and synthetic datasets that demonstrate different patterns of shifts, confirming its efficacy in enhancing the generalizability of GNNs.
\end{abstract}

\section{Introduction}


Graph Neural Networks (GNNs) perform node classification tasks by learning the relationships between node features, local structures, and node labels on a training graph. {They have demonstrated promising performance across various applications, such as social recommendation, traffic forecasting, and chemical property prediction~\cite{Wu2021_GNN_survey}. However, their success largely relies on the in-distribution (ID) assumption~\cite{Liu2023FLOOD}. }
In real life, test samples are often collected from different distributions such as various geographical areas, 
domains, or time periods, leading to potentially different and unknown distributions from the training data.
Consequently, the model might learn an incorrect mapping of \textit{(feature, structure)-label} relationships that fail on test data, leading to a well-known out-of-distribution (OOD) problem~\cite{arjovsky2020invariant}.

One \textit{de facto} approach to graph OOD generalization~\cite{bai2020decaug, Krueger2020, Sagawa2019, Shen2021, Ye2021theoretical} 
assumes that there exist ``true'' correlations between input and labels that are invariant under distribution shifts. 
These correlations can be obtained by identifying the non-causal/spurious parts of the input (\textit{e.g.} sub-ego-networks or subvectors of the node's embedding). 
However, they model the features of a node and its neighborhood as a single entity. By doing so, they implicitly assume that the shifts in features ($\mathbf{X}$) and structures ($\mathcal{A}$)  occur simultaneously and cannot be separated from one another. {A detailed discussion on the limitations of existing methods is provided in Appendix \ref{related-work-ood}.}

However, we claim that such an assumption may fail on many real-world graphs {(\textit{e.g.}, citation graphs, social networks)}. 
The generation process of graph data is complex: for a node, its features and local topology can be viewed as reflections of its true representations by different ``observers''. 
{
For example, to predict whether someone might be prone to drug addiction based on their online social networks, we can analyze their profile features (\textit{e.g.}, job, social status, photos) as the public image they choose to present. The accounts they follow can indicate additional aspects (\textit{e.g.}, hobbies, personal life, mental status) not explicitly stated in their profiles. Both the profile features and their network connections provide insights into the user's true status, but they highlight different facets through distinct mappings. Together, they offer valuable information for predicting the target behavior.
However, these elements may display varying distribution patterns over time or across different locations. For instance, individuals might disclose different profile information depending on state laws, or their network connections might change following updates to the social network's recommendation algorithm. Such variations can occur  separately and both impact the original \textit{(feature, structure)-label} relationships.
}
%
%

We formalize the
graph generation process as a Structural Causal Model (SCM). 
Based on the graph generation process, we redefine the types of distribution shifts. 
Previously, graph OOD largely followed definitions from general data: for \textit{covariate shift}, $P^\text{train}(\mathbf{X}, \mathcal{A})\neq P^\text{test}(\mathbf{X}, \mathcal{A})$; for concept shift, $P^\text{train}(\mathbf{Y}|\mathbf{X}, \mathcal{A})\neq P^\text{test}(\mathbf{Y}|\mathbf{X}, \mathcal{A})$~\cite{gui2022good}. 
We reformulate the definition, attributing \textit{covariate shift} to changes in the true representations, and define two types of \textit{concept shift} based on the different mappings of features and structure, respectively. We justify the universality of our new definitions.

To this end, we demonstrate that any type of distribution shift can alter the \textit{(feature, structure)-label} mapping. However, we observe that because the generating mechanism of the features or structures does not change, the true \textit{feature-label} or \textit{structure-label} mappings should remain invariant. Based on this observation, we propose a causal decoupling framework, \model{}, that learns unbiased \textit{feature-label} or \textit{structure-label} mappings as causal effects for predicting node labels. Intuitively, \model{} answers the question: \textit{``What information about the label can the node features provide when its local structure is unavailable, and vice versa?"} We provide a theoretical analysis to demonstrate the feasibility and effectiveness of \model{}. We also present an implementable paradigm of \model{} that utilizes Generalized Robinson Decomposition to estimate the causal effects as a practical solution. We evaluate our proposed method across both real-world and synthetic datasets with different patterns of shifts. The results consistently demonstrate that our proposed method improves the generalization ability of GNNs on the node classification task.

\section{Casual Decoupling Framework}
In this section, we introduce the causal decoupling framework, \model{}, which performs node classification by independently estimating the treatment effects of node features and neighborhood representations. We develop a comprehensive theoretical foundation to support the rationale: Initially, we present a novel \textbf{graph generation process} using a Structural Causal Model (SCM), which contrasts with the assumptions underlying most GNN models (section~\ref{sec:graph-generation-process}). Utilizing this SCM, we redefine various \textbf{types of distribution shifts in graph data} and analyze how each SCM element changes under these shifts (Section \ref{sec::dist-shift}). This analysis leads to the development of the \textbf{causal decoupling framework}, which seeks to separately assess the direct impacts of node features and neighborhood representations on node labels (section~\ref{sec::graph-decoupling}). To achieve an unbiased estimation of their impact, we treat the impact as a treatment effect, which can be estimated with a \textbf{casual estimation} model that considers the confounder effect (section~\ref{sec::casual-effecr-estimation}). We visually summarize this conceptual flow in Figure \ref{fig::overall}. Following this theoretical basis, we propose \textbf{a practical end-to-end paradigm} that leverages SOTA casual inference technologies to estimate the decoupled representations and make final predictions, as detailed in Section \ref{sec::casual-effecr-estimation}.

\begin{figure}
    \centering   
    \includegraphics[width=0.5\textwidth]{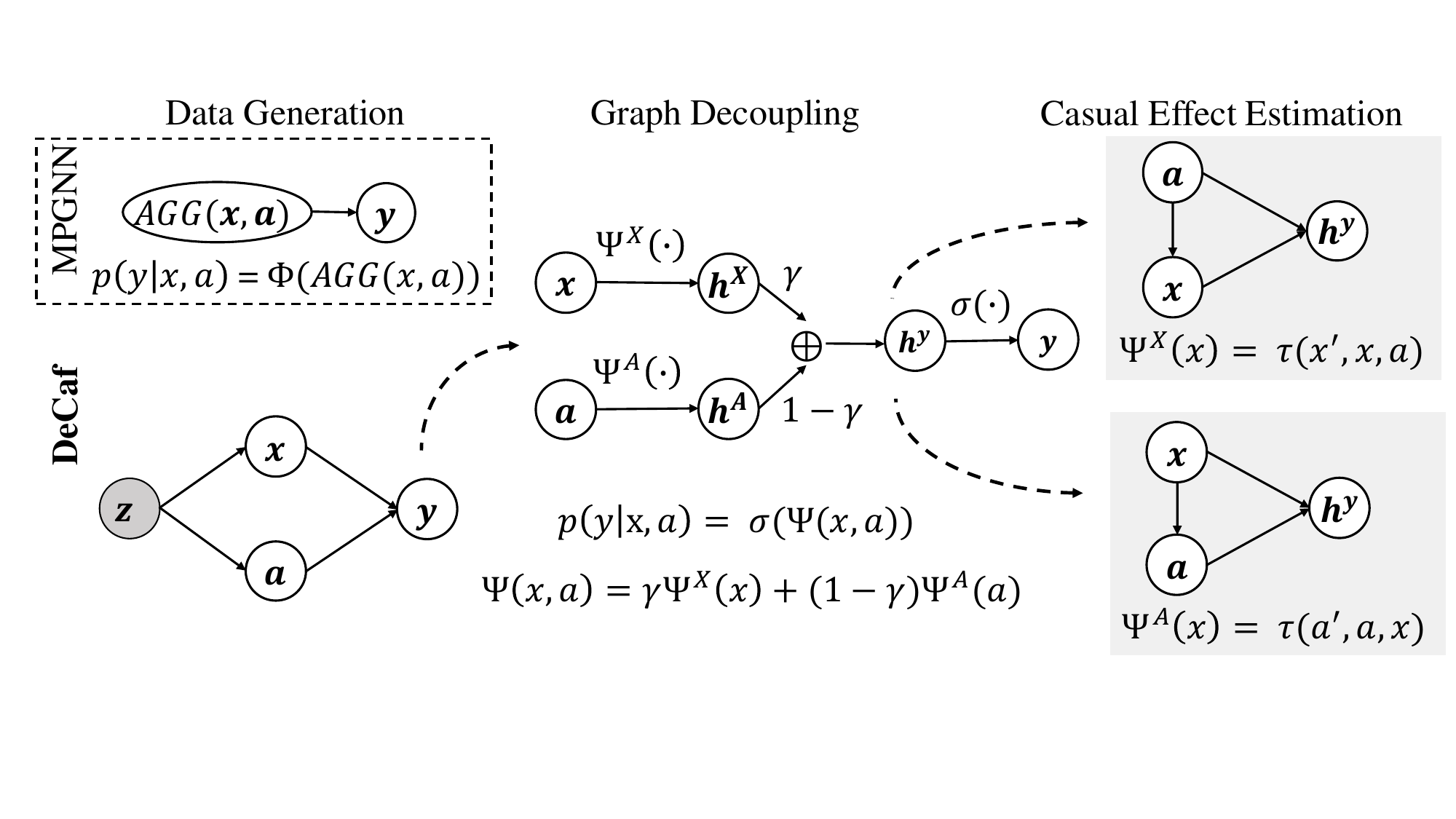}
    \caption{An overview of the conceptual flow. First, we introduce a new data generation process with the SCM. We separately estimate the individual impact of the node features and neighborhood representations on node labels in graph decoupling. To achieve an unbiased estimation of their impact, we propose to treat the impact as a treatment effect, which can be estimated with a casual estimation model that considers the confounding effect.  }
    \label{fig::overall}
\end{figure}
\subsection{Graph generation process}
\label{sec:graph-generation-process}
We build a SCM for graph generation processes based on the assumption that for each node $v_i$, there exists an unobserved raw latent vector $\mathbf{z}_i\in \mathbb{R}^{p}$ that is fully informative about its \textit{``true nature''}, and its ground-truth label, $\mathbf{y}_i\in \mathbb{R}^{k}$, is an affine transformation of $\mathbf{z}_i$ to the label space. 
For a connected graph $\mathcal{G} = (\mathcal{V}, \mathcal{E})$, where $\mathcal{V}$ is the node set and $\mathcal{E}$ is the edge set, we directly observe node features $\mathbf{X}\in \mathbb{R}^{n\times d}$, and the adjacency matrix 
$\mathcal{A}\in {\{0,1\}}^{n\times n}$. Although the latent variable matrix 
$\mathbf{Z}\in \mathbb{R}^{n\times p}$ cannot be directly observed, we believe there exists a ``\textit{hidden observer}'' that can observe $\mathbf{Z}$ and decide $\mathbf{X}$ and $\mathcal{A}$ based on some rules. 

{
\begin{defination}
The ``true nature'', denoted as $\textbf{z}$, is an unobserved latent variable representing ``all facts'' (extrinsic or intrinsic) about a node in a graph. For a node instance $v_i$, knowing $\textbf{z}_i$ provides sufficient information about its features $\textbf{f}_i$ and its connectivity with any other node $u_j$ as $\mathcal{A}_{ij}$. Specifically, there exists a function such that $\mathbf{x}_i = f(\mathbf{z}_i)$ and a function such that $\mathcal{A}_{ij}=g(\mathbf{z}_i, \mathbf{z}_j)$.
\end{defination}
}

{ \textbf{{An intuitive example. }}A person's ``\textit{true nature}'' would encompass not only extrinsic details like gender and education but also intrinsic qualities such as personality and beliefs. These intrinsic features are difficult to measure directly and completely, yet they largely influence a person’s public profile (\textit{e.g.}, personal webpage) and social relationships (\textit{e.g.}, friendships). For example, the content of a personal webpage is shaped not only by true experiences but also by the individual’s personality, which affects how those experiences are presented. Understanding a person’s \textit{true nature} provides a complete view of their behaviors.}

For simplicity of analysis, we assume the features of node $v_i$, 
$\mathbf{x}_i$, is an affine transformation of $z_i$, and $\mathcal{A}_{ij}$ is decided by the similarity between some linear transformations of $\mathbf{z}_i$ and $\mathbf{z}_j$. 

\begin{assumption}
\label{assumption_1}
The generation process of $(\mathbf{X}, \mathbf{Y}, \mathcal{A})$ given $\mathbf{Z}$ can be expressed as follows:
\begin{equation}
\mathbf{x}_i = \mathcal{M}_{f}\mathbf{z}_i + \mathbf{b}_{f}, 
\end{equation}
\begin{equation}
\label{eqn::y-generation}
\mathbf{y}_i = \mathcal{M}_{y}\mathbf{z}_i + \mathbf{b}_{y}, 
\end{equation}
\begin{equation}
p(\mathcal{A}_{ij}=1) = c\cdot\left({\Vert\mathcal{M}_{s}\mathbf{z}_i, \mathcal{M}_{o}\mathbf{z}_j\Vert^{2}_{2} + 1}\right)^{-1}, 
\end{equation}
where $\mathcal{M}_{f}\in \mathbb{Z}^{d\times p}$, $\mathcal{M}_{y}\in \mathbb{Z}^{k\times p}$,
$\mathcal{M}_{s}\in \mathbb{Z}^{q\times p}$, and $\mathcal{M}_{o}\in \mathbb{Z}^{q\times p}$ are constant linear transformation matrices,  $\mathbf{b}_{f}\in \mathbb{R}^{d}$ and $\mathbf{b}_{y}\in \mathbb{R}^{k}$ are constant vectors, $\Vert\cdot\Vert^{2}_{2}$ is euclidean distance, and $0\leq c\leq1$ is a constant number to control the density of the adjacency matrix.  
\end{assumption}

We claim that Assumption \ref{assumption_1} can be generalized to different scenarios. For instance, a homophilous graph would have $\mathcal{M}_{o}$ $\mathcal{M}_{s}$ assigned with the same values as $\mathcal{M}_{y}$; conversely, a heterophilous graph would have $\mathcal{M}_{o}$ and $\mathcal{M}_{s}$ to be opposite with each other; when $\mathcal{M}_{o}$ or $\mathcal{M}_{s}$ are assigned with values close to zeros, the connection will show more randomized behaviors. 

According to Assumption \ref{assumption_1}, node $v_i$'s feature $\mathbf{x}_i$ is only dependent on $\mathbf{z}_i$; however, its connection with other nodes depends on the latent variable $\mathbf{Z}$ of the whole graph, and it seemingly brings \textit{spill-over effects/inferences}, {which occurs when the treatment received by one instance affects the outcome of another instance. This effect breaks \textit{Stable Unit Treatment Value Assumption (SUTVA)}   that the potential outcomes of any unit do not vary with the treatment assigned to other units. It is one of the core assumptions in causal inference as we will discuss in section 2.4.}
To address this, we investigate the correlation between the features of a central node and its neighboring nodes. As the adjacent matrix $\mathcal{A}$ does not contain the features of  neighboring nodes, we define an embedding matrix 
\begin{equation}
    \mathbf{a}_i = \text{agg} (\{ \mathcal{M}_a \mathbf{z}_{j}\}) \,\,\,\, \text{where}\,\,\,\, j\in \mathcal{N}_{i}^{1}\cup\mathcal{N}_{i}^{2}...\mathcal{N}_{i}^{l},
\end{equation}
where $\mathbf{a}_i$ is the $i$-th vector of $\mathcal{A}$ which represents the neighborhood information of all nodes, 
$\mathcal{M}_a$ is a linear transformation matrix, and $\mathcal{N}_{i}^{l}$ is the node set of $l$-hop neighbors of node $v_i$. $\text{agg}(\cdot)$ is an aggregation function (e.g. mean).  
{Although this may introduce undesired correlations between samples that complicate our analysis, we show that under the \textit{Law of Large Numbers}, $\mathbf{a}_i$ experiences negligible spillover effects from other samples. Further details are discussed in Appendix \ref{append:spillover}.} 




To this end, we build a SCM in Figure \ref{fig::scm} (a-\textit{left}) to represent the relationships between the variables $\mathbf{z}$, $\mathbf{x}$, $\mathbf{a}$, and $\mathbf{y}$. 
{When developing a machine learning model, the \textit{Close World Assumption} is generally followed, that the training data encompasses sufficient information to make accurate predictions. Applying that to our case, 
given the unobservability of $\mathbf{z}$, we need to assume that all features within $\mathbf{z}$ pertinent to $\mathbf{y}$ are recoverable through $\mathbf{x}$ and $\mathbf{a}$. 
For the convenience of our narrative, we assume $\mathbf{z}$ is directly caused by $\mathbf{a}$ and $\mathbf{x}$, and we replace the causal link between $\mathbf{z}$ and $\mathbf{y}$ with pseudo-causal links $\mathbf{x} \dashrightarrow \mathbf{y}$ and $\mathbf{a} \dashrightarrow \mathbf{y}$. \textbf{Future analysis is based on the modified SCM in Figure \ref{fig::scm} (a-\textit{right}).}
Based on the modified SCM, the correlation between $\mathbf{x}$ and $\mathbf{y}$ is constituted by two back door paths: $\mathbf{x} \leftarrow \mathbf{z} \rightarrow \mathbf{y}$ and $\mathbf{x} \leftarrow \mathbf{z} \rightarrow \mathbf{a} \rightarrow \mathbf{y}$. Similarly, the correlation between $\mathbf{a}$ and $\mathbf{y}$ is constituted by $\mathbf{a} \leftarrow \mathbf{z} \rightarrow \mathbf{y}$, $\mathbf{a} \leftarrow \mathbf{z} \rightarrow \mathbf{x} \rightarrow \mathbf{y}$. For the paths $\mathbf{x} \leftarrow \mathbf{z} \rightarrow \mathbf{y}$ and $\mathbf{a} \leftarrow \mathbf{z} \rightarrow \mathbf{y}$, $\mathbf{z}$ is the common cause (confounder) of $\mathbf{a}$ and $\mathbf{x}$. 
}

\begin{figure}
    \centering
    \includegraphics[width=0.7\textwidth]{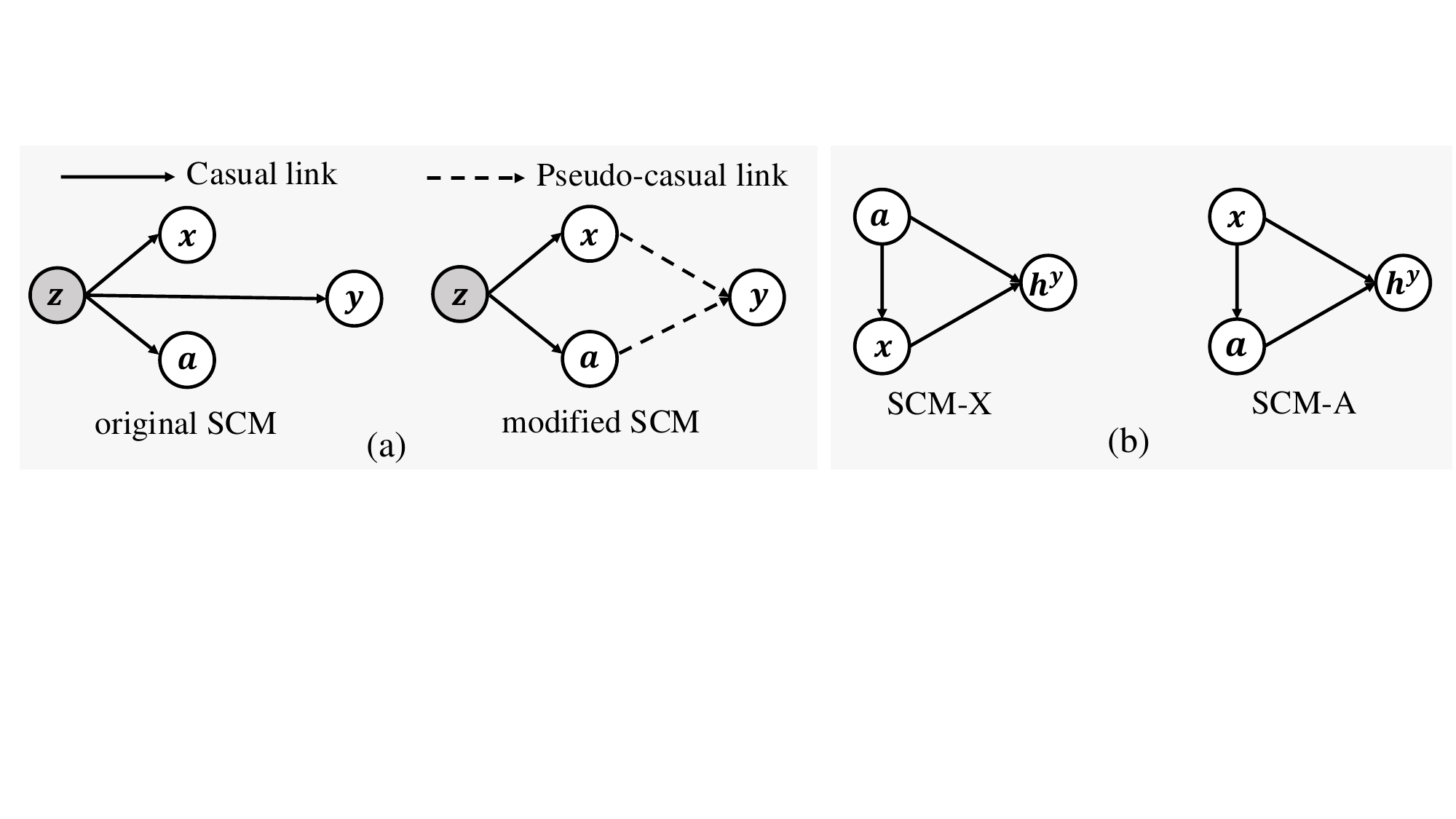}
    \caption{(a) SCMs represent a node's data generation process. (a-\textit{left}) is the original SCM, and (a-\textit{right}) is the modified SCM by replacing the causal link between $z$ and $y$ with pseudo-casual links. (b) The two-view SCMs for the casual effect estimation. For SCM-X, $\mathbf{a}$ is the treatment, and $\mathbf{x}$ is the confounder. For SCM-A, the treatment and the confounder are reversed.  }
    \label{fig::scm}
\end{figure}

\subsection{Types of distribution shifts on graph data}
\label{sec::dist-shift}
Based on the proposed SCM, we redefine the covariate shift and the concept shift on graph data. 
Bearing in mind that OOD generalization is impossible without any assumption in the data generalization process, we make a constraint such that the conditional distribution between the latent variable $\mathbf{z}$ and the ground truth label $\mathbf{y}$ should remain invariant across different domains (e.g. $\mathcal{M}_y^\text{train} = \mathcal{M}_y^\text{test}$, $\mathbf{b}_y^\text{train} = \mathbf{b}_y^\text{test}$or $P^\text{train}(\mathbf{y}|\mathbf{z}) = P^\text{test}(\mathbf{y}|\mathbf{z})$). 
We justify it by pointing out that both $\mathbf{z}$ and $\mathbf{y}$ characterize the intrinsic nature of the data point, and the invariance of their correlation is necessary to ensure that OOD generalization is possible. 
Starting with that, we identify different sources of distribution shifts. 
We attribute covariate shift to the drift of the latent variable $\mathbf{z}$.
\begin{defination}
\textit{(Covariate Shift)}: $P^\text{train}(\mathbf{z}) \neq P^\text{test}(\mathbf{z})$ while $\mathcal{M}_x, \mathbf{b}_x, \mathcal{M}_s, \mathbf{b}_s, \mathcal{M}_o, \mathbf{b}_o$ remain the same for training and test data.
\label{def::convariate-shift}
\end{defination}
We attribute concept shift to the changes in the generation process of node features or the edges with a fixed $\mathbf{z}$ distribution. We define each case, separately.


\begin{defination}
\textit{(Concept Shift-X)}:  $P^\text{train}(\mathbf{z}) = P^\text{test}(\mathbf{z})$ while $(\mathcal{M}_x^\text{train}, \mathbf{b}_x^\text{train}) \neq (\mathcal{M}_x^\text{test}, \mathbf{b}_x^\text{test})$ and the rest parameters remain the same.
\end{defination}

\begin{defination}
\textit{(Concept Shift-A)}: $P^\text{train}(\mathbf{z}) = P^\text{test}(\mathbf{z})$ while $(\mathcal{M}_s^\text{train}, \mathbf{b}_s^\text{train}) \neq (\mathcal{M}_s^\text{test},\mathbf{b}_s^\text{test})$ and the rest parameters remain the same.
\end{defination}

We investigate and summarize the behavior of the joint distribution of $\mathbf{x}$ and $\mathbf{a}$ and the relations between other variables under different distribution shifts in Appendix \ref{appendix-distribution-shifts}. As it shows, under all three types of distribution shifts, the distribution $p(\mathbf{y}|\mathbf{a}, \mathbf{x})$ changes, thus the directly estimated correlation between $\mathbf{y}$ and $\mathbf{a}, \mathbf{x}$ with the training set may fail in the test set. On the other hand, at least one of $p(\mathbf{y}|\mathbf{x})$ and $p(\mathbf{y}|\mathbf{a})$ remains constant under the shifts.

\subsection{Graph decoupling}
\label{sec::graph-decoupling}
We propose to estimate $p(\mathbf{y}|\mathbf{a}, \mathbf{x})$ as the combination of $p(\mathbf{y}|\mathbf{x})$ and $p(\mathbf{y}|\mathbf{a})$. However, when predicting with a graph neural network, $p(\mathbf{y}|\mathbf{a}, \mathbf{x})$ is often modeled as $\sigma(\mathbf{h}_{y}) = \sigma (\Psi(\mathbf{a}, \mathbf{x}))$, where $\mathbf{h}_{y} = \Psi(\cdot)$ represents the output embedding of the GNN model before the activation, and $\sigma(\cdot)$ is the non-linear activation function. Due to the non-linearity of $\sigma(\cdot)$, the assumption that $p(\mathbf{y}|\mathbf{a}, \mathbf{x})$ can be estimated as the combination of $p(\mathbf{y}|\mathbf{x})$ and $p(\mathbf{y}|\mathbf{a})$ can be easily violated. Also, the mapping to the probability space before the combination could potentially lose the rich information delivered by the embedding space. To address this, we perform the combination process in the embedding space instead. 

\begin{assumption}
\label{assumption-2}
The value of 
 $\Psi(\mathbf{x}, \mathbf{a})$ can be estimated as the weighted average of the values of individual functions $\Psi_{\mathbf{x}}(\mathbf{x})$ and $\Psi_{\mathbf{a}}(\mathbf{a})$.
\begin{equation}
\label{eq::additive}
    \Psi(\mathbf{x}, \mathbf{a}) = \gamma \cdot \Psi_{\mathbf{x}}(\mathbf{x}) + (1-\gamma) \cdot \Psi_{\mathbf{a}}(\mathbf{a}), 
\end{equation}
where $0\leq \gamma \leq 1$ is a constant hyperparamter {that controls the ratio between the contribution of $\Psi_{\mathbf{x}}(\mathbf{x})$ and $\Psi_{\mathbf{a}}(\mathbf{a})$ to the final prediction. } 
\end{assumption}

We claim that Assumption~\ref{assumption-2} is a reasonable assumption when $\Psi(\mathbf{x}, \mathbf{a})$ is modeled with message-passing graph neural networks (MPGNNs). Most MPGNNs obtain the representation of the central node by aggregating its neighboring nodes with operations like summing or (weighted) averaging, which can be viewed as 
a process in which each central/neighboring node shares its ``vote'' on deciding the outcome. 
To better make our point, we analyze classification on the Simple Graph Convolution (SGC) model~\cite{wu2019-SGC}. We show that when a fixed weight $\gamma$ is assigned to the central node during the aggregation process, the prediction function of SGC can be written as:

\begin{align}
\begin{split}
\sigma\Bigl(\gamma\underbrace{[({{\tilde{D}}^{k}})^{-\frac{1}
{2}}A\tilde{A}^k({{\tilde{D}}^{k}})^{\frac{1}{2}}\mathbf{X}\Theta]_i}_{\Psi_{\mathbf{a}}(\mathbf{a}_i)} \\
+ (1-\gamma)\underbrace{[(({{\tilde{D}}^{k}})^{-\frac{1}
{2}}I({{\tilde{D}}^{k}})^{\frac{1}{2}})^{k}\mathbf{X}\Theta]_i}_{\Psi_{\mathbf{x}}(\mathbf{x_i})}\Bigr), 
\end{split}
\end{align}

where $\hat{y} = \sigma ({S}^{k}\mathbf{X}\Theta)$, $S$ is the
“normalized” adjacency matrix $S = \tilde{D}^{-\frac{1}{2}}\tilde{A}\tilde{D}^{\frac{1}{2}}$, $\tilde{A} = A+I$, $\tilde{D}$ is the degree matrix of $\tilde{A}$, $k$ is the number of layers, $\Theta$ is the parameterized weights of each layer into a single matrix: $\Theta = \Theta_{0}\Theta_{1}...\Theta_{k}$.  We can view  $[({{\tilde{D}}^{k}})^{-\frac{1}
{2}}A\tilde{A}^k({{\tilde{D}}^{k}})^{\frac{1}{2}}\mathbf{X}\Theta]_i$ as $\Psi_{\mathbf{a}}(\mathbf{a}_i)$, and $[(({{\tilde{D}}^{k}})^{-\frac{1}
{2}}I({{\tilde{D}}^{k}})^{\frac{1}{2}})^{k}\mathbf{X}\Theta]_i$ as $\Psi_{\mathbf{x}}(\mathbf{x_i})$, which aligns with asssumtion \ref{assumption-2}. {Notice for $\Psi_{\mathbf{a}}(\mathbf{a}_i)$ the adjacency matrix at the first layer does not include self-loops, ensuring that the node feature of the instance itself is not included.}

We thus claim that Assumption 2 is reasonable. Note that it only holds when $\Psi_{\mathbf{x}}(\cdot)$ and $\Psi_{\mathbf{a}}(\cdot)$ are unbiased estimations. As shown in Figure \ref{fig::scm} (a), $\mathbf{a}$ and $\mathbf{x}$ are caused by a common factor $\mathbf{z}$, thus they act like confounders of each other through back-door paths  $\mathbf{x} \leftarrow \mathbf{z} \rightarrow \mathbf{a} \rightarrow \mathbf{y}$ and $\mathbf{a} \leftarrow \mathbf{z} \rightarrow \mathbf{x} \rightarrow \mathbf{y}$. The direct estimation of $\Psi(\mathbf{x})$ and $\Psi(\mathbf{a})$ is biased. In the next section, we aim to obtain unbiased estimations of $\Psi_{\mathbf{x}}(\cdot)$ and $\Psi_{\mathbf{a}}(\cdot)$ by considering the confounder effect.

\subsection{Casual effect estimation}
\label{sec::casual-effecr-estimation}

To clearly understand how changing $\mathbf{a}$/$\mathbf{x}$ directly influences $\mathbf{h}^y$ while considering and adjusting for other confounding factors, we propose to treat $\Psi_{\mathbf{x}}(\mathbf{x})$ and $\Psi_{\mathbf{a}}(\mathbf{a_i})$ as treatment effect of $\mathbf{x}$ and $\mathbf{a}$ on the outcome $\mathbf{h}^y$ (Section.~\ref{sec::graph-decoupling}). 
In a generalized causal effect estimation framework where the treatment can be a continuous representation instead of binary values, we can interpret the effect of treatment $\mathbf{t}$ as \textit{``what additional information $\mathbf{t}$ can provide on predicting 
the outcome''}. In that sense, we want to estimate the treatment effect of $\mathbf{x}$ and $\mathbf{a}$, separately, so each can provide information about the output representation when conditioned on one another. We build two Structural Casual Models to represent the cases where one of $\mathbf{x}$/$\mathbf{a}$ is the treatment and another is the confounder, as shown in Figure \ref{fig::scm} (b). Based on the SCMs, we aim to utilize casual effect inference to estimate the following Conditional Average Treatment Effects (CATEs):
\begin{align}
\begin{split}
    \Psi_{\mathbf{a}}(\mathbf{a}) \triangleq  &\tau (\mathbf{a}', \mathbf{a}, \mathbf{x}) \\
    =& \mathbb{E}[\mathbf{h}^y|C=\mathbf{x}, do(T=\mathbf{a})] \\-& 
    \mathbb{E}[\mathbf{h}^y|C=\mathbf{x}, do(T=\mathbf{a}')], 
    \end{split}
\label{eq:tau_a}
\end{align}

\begin{align}
\begin{split}
    \Psi_{\mathbf{x}}(\mathbf{x}) \triangleq & \tau (\mathbf{x}', \mathbf{x}, \mathbf{a}) \\=& \mathbb{E}[\mathbf{h}^y|C=\mathbf{a}, do(T=\mathbf{x})] \\ -& 
    \mathbb{E}[\mathbf{h}^y|C=\mathbf{a}, do(T=\mathbf{x}')],
    \end{split}
    \label{eq:tau_x}
\end{align}
where $\mathbf{x}'$ and $\mathbf{a}'$ are counterfactual node features and neighborhood representations. Their definitions will be discussed later. 
Since the counterfactual outcome is unobservable,
we follow the common practice and made the assumptions in Appendix \ref{appendix-CATE-assumptions} to estimate the CATEs.

We then propose a 
practical solution
to estimate $\tau (\mathbf{a}', \mathbf{a}, \mathbf{x})$ and $\tau (\mathbf{x}', \mathbf{x}, \mathbf{a})$, and combine them to make predictions.
We apply \textit{Generalized Robinson Decomposition (GRD)}
to isolate the causal estimands and reduce the biases when estimating CATEs.   

\textit{Generalized Robinson Decomposition.}
GRD is proposed as a generalized version of Robinson Decomposition \cite{Robinson1988} to adapt graph-structured treatment. Specifically, GRD assumes that the causal effect is a product effect:

\begin{assumption}
\label{assumption::product-effect}
(Product Effect) We consider the following partial parameterization of $p(y|\mathbf{c},\mathbf{t})$: 
\begin{equation}
\label{equation::product-effect}
    y = g(\mathbf{c})^{\top}h(\mathbf{t}) + \varepsilon,
\end{equation}
where $\mathbf{t}$, $\mathbf{c}$ are the representation of the treatment and the confounder; $g: \mathcal{C} \rightarrow \mathbb{R}^d$, $h: \mathcal{T} \rightarrow \mathbb{R}^d$ and $\mathbb{E}[\varepsilon|\mathbf{c}, \mathbf{t}] = \mathbb{E}[\varepsilon|\mathbf{c}] = 0$, for all $(\mathbf{c}, \mathbf{t})\in \mathcal{C}\times \mathcal{T}$. 
\end{assumption}
{Assumption \ref{assumption::product-effect}  has been proven to be mild and can approximate any arbitrary bounded continuous functions with a small error bound~\cite{kaddour2021causal}.} 
We define \textit{propensity features} as $e(\mathbf{c}) \triangleq \mathbb{E}[h(\mathbf{t})|\mathbf{c}]$ and $m(\mathbf{c})\triangleq \mathbb{E}[\mathbf{y}|\mathbf{c}] = g(\mathbf{c})^{\top}e(\mathbf{c})$. Following the same steps as in Robinson Decomposition, the GRD of Equation \ref{equation::product-effect} is:
$\mathbf{y} - m(\mathbf{c}) = g(\mathbf{c})^{\top}(h(\mathbf{t})-e(\mathbf{c})) + \varepsilon$.
Given nuisance estimates $\hat{m}(\cdot)$ and $\hat{e}(\cdot)$, $g(\cdot)$ and $h(\cdot)$ can be derived with the optimization problem:
\begin{align}
\label{equation::optimization}
\begin{split}
    &\hat{g}(\cdot), \hat{h}(\cdot) \triangleq \arg \min_{g,h} \\
    & \left\{ \frac{1}{n} \sum^{n}_{i=1} \left( \mathbf{y}_{i} - \hat{m}(\mathbf{c}_i)-g(\mathbf{c}_i)^{\top}\bigl(h(\mathbf{t}_{i})-\hat{e}(\mathbf{c}_{i})\bigr) \right) ^{2} 
    \right\}
    \end{split}
\end{align}
With estimated $g(\cdot)$ and $h(\cdot)$, 
the CATE of treatment variable $\mathbf{t}$ and its counterfactual $\mathbf{t}'$ given the confounder $\mathbf{c}$ can be simplified as:
\begin{equation}
\label{equation::CATE-estimation}
    \tau (\mathbf{t}', \mathbf{t}, \mathbf{c}) = g(\mathbf{c})^{\top} \left( h(\mathbf{t}') - h(\mathbf{t})\right). 
\end{equation}
 To optimize Equation \ref{equation::optimization},  we can use \textit{Structured Intervention Networks} (SIN)~\cite{kaddour2021causal}, a two-stage training algorithm to learn $g(\cdot)$ and $h(\cdot)$ as neural networks and estimate the CATE with Equation \ref{equation::CATE-estimation}. In our case, we need to estimate separate set decomposition functions,  $g(\cdot)$ and $h(\cdot)$, for each casual model, and to apply SIN directly would bring the following drawbacks: 1) By separately applying SIN to our two casual models, it would fail to share the representations of the common factors (e.g. the confounder of one model is the treatment of another); the lack of knowledge sharing could make the learning process less efficient and the learned model prone to overfitting. 2) Unlikely common scenarios where counterfactual treatments are well-defined, in our case, $\mathbf{f'}$ and $\mathbf{a'}$ can not be easily decided. Since they are both continuous values that can span the space,  \textit{how to define their values in the situation where the node is \textbf{not} treated by the treatment (node features/neighborhood representations)?} To address these two problems, we propose \textit{Dual Casual Decomposition} and \textit{Background Counterfactual Selection} as components of our method.

\subsubsection{Dual casual decomposition}
We aim to learn two sets of $g(\cdot), h(\cdot)$ for SCM-X and SCM-A. We denote them as $g^{X}(\cdot), h^{X}(\cdot)$ and $g^{A}(\cdot), h^{A}(\cdot)$, separately. Each casual model is also associated with a set of $e(\cdot), m(\cdot)$, denoted as $e^{X}(\cdot), m^{X}(\cdot)$ and $e^{A}(\cdot), m^{A}(\cdot)$.
Observing that directly applying SIN separately could double the training time and be inefficient, we notice that for our two SCMs, the treatment of one model is the confounder of the other. Leveraging this fact, we allow the embedding of the same entity to be shared across the two models. We propose a new paradigm, named {Dual Causal Decomposition}, which first learns the common embedding of the two models and then applies a lighter dual version of SIN to estimate the remaining embedding. Our approach effectively reduces model parameters.
The complete process is provided in Appendix \ref{appendix:dual}.

\subsubsection{Background counterfactual selection}\label{sec:counterfact}

We view the treatment effect as useful information provided by the treatment variables for decision-making in predictions. At each of the casual models, the factual treatment ($\mathbf{a}$ or $\mathbf{x}$) received by an instance reveals information about its label. In the counterfactual ``\textit{untreated}'' case, the treatment representation should reveal no such information. It is problematic to simply set the counterfactual representation as zeros since an all-zeros embedding does not necessarily mean the absence of information. Instead, for each instance, at each time, we randomly sample a treatment representation from the whole dataset, and we answer the question: \textit{what net effect does the factual treatment bring compared to the random counterfactual treatment?} We repeat the above process multiple times and average the net effects as the estimated treatment effect. 

Specifically, for SCM-A,  we randomly sample $k$ neighborhood representations with indexes $s_1, ..., s_k$ from the datasets as the counterfactual treatment. The counterfactual outcome is then estimated as follows: 
\begin{equation}
\mathbf{E}[\mathbf{h}^y|C=\mathbf{a}, do(T=\mathbf{x}')] \triangleq \frac{1}{k} \sum_{i=1}^{k} g^{A}(a_{s_i})^{\top}h^{A}(f_{{s_i}}).
\label{eq:cate-f}
\end{equation}
Similarly, for SCM-X, the counterfactual outcome is estimated as:
\begin{equation}
\mathbf{E}[\mathbf{h}^y|C=\mathbf{x}, do(T=\mathbf{a}')] \triangleq \frac{1}{k} \sum_{i=1}^{k} g^{X}(f_{{s_i}})^{\top}h^{X}(a_{s_i}).
\label{eq:cate-a}
\end{equation}

We then estimate ${\Psi_{\mathbf{a}}}(\mathbf{a})$ and ${\Psi_{\mathbf{x}}}(\mathbf{x})$ with Equation \ref{eq:tau_a} and \ref{eq:tau_x} and make a prediction:
\begin{equation}
    \hat{\mathbf{y}} = \sigma \bigl( \gamma \cdot \Psi_{\mathbf{x}}(\mathbf{x}) + (1-\gamma) \cdot \Psi_{\mathbf{a}}(\mathbf{a})\bigr). 
\label{eq:final_prediction}
\end{equation}
We provide the pseudo-code for \model{} in Algorithm \ref{alg::model}, and a summary of important notations in Table \ref{tab::notation}.

\section{Experiments}\label{sec:exp}

To evaluate the effectiveness of \model{}, we aim to answer the following research questions (\textbf{RQs}):
\textbf{RQ1:}  How well can \model{} handle  \textit{covariate shift}? 
\textbf{RQ2:}  How well can \model{} handle  \textit{concept shift}? 
{\textbf{RQ3:} How does the confounder effect between node feature $\mathbf{x}$ and the neighborhood representation $\mathbf{a}$ impact the performance of different models, and how well can \model{} handle this confounder effect? }

 
\subsection{Comparison methods}\label{sec:comare}
We compare \model{} with SOTA Graph OOD generalization methods that are applicable to the node classification including IRM \cite{arjovsky2020invariant}, REX \cite{Krueger2020}, EERM \cite{Wu2022Handling}, CIT \cite{xia2023learning}, FLOOD \cite{Liu2023FLOOD}, and StableGL \cite{zhang2023StableGL}.  We also compare with empirical risk minimization (ERM) as {a baseline}. 
%
Among these methods, REX \cite{Krueger2020} requires access to multiple training environments, thus it is only applicable to the \texttt{OGB-elliptic} and \texttt{Facebook-100}  datasets with multiple training graphs.
%
%
SR-GNN~\cite{zhu2021shiftrobust} requires access to the input distribution of the test set when training the model and does not apply to the inductive setting studied in this paper. 
{All experiments are conducted on an NVIDIA GeForce RTX 3090 GPU with 24GB memory}. We compare \model{} with the baseline methods regarding their restrictions and complexity in Appendix \ref{sec::compare}. 

 
 

\begin{minipage}[htbp]{\linewidth}%
\centering
\tiny
\captionsetup[table]{hypcap=false}
\captionof{table}{Test Macro-F1 scores on single-graph datasets with soft label-leaveout. ``OOM'' stands for out of memory. The best results are bold-faced.}
\begin{tabular}
{ l l c c c c }

\toprule
\textbf{Dataset} & \textbf{Method} & \textbf{SGC} & \textbf{GCN} & \textbf{GAT} \\
\midrule
\multirow{6}{*}{\texttt{Cora}} 
& ERM    & 65.61$\pm$4.28 & 67.94$\pm$0.89 & 66.95$\pm$2.81 \\
& IRM    & 65.52$\pm$3.01 & 66.04$\pm$2.33 & 65.46$\pm$4.08 \\
& EERM   & 65.27$\pm$2.22 & 67.24$\pm$2.86 & 69.50$\pm$2.66 \\
& CIT    & 60.51$\pm$1.48 & 61.30$\pm$0.72 & 67.80$\pm$2.28 \\
& FLOOD  & 63.58$\pm$4.86 & 62.26$\pm$5.54 & 66.35$\pm$5.43 \\
& StableGL & 59.96$\pm$13.18 & 65.42$\pm$5.20 & 67.28$\pm$1.11 \\
& \model{}  & \textbf{71.41$\pm$1.19} & \textbf{70.12$\pm$1.29} & \textbf{70.58$\pm$0.49} \\
\midrule
\multirow{6}{*}{\texttt{Citeseer}} 
& ERM    & 49.05$\pm$3.15 & 49.58$\pm$1.47 & 52.90$\pm$0.58 \\
& IRM    & 52.98$\pm$1.86 & 51.82$\pm$1.59 & 51.75$\pm$1.00 \\
& EERM   & 44.53$\pm$0.92 & 43.56$\pm$2.10 & 52.63$\pm$4.78 \\
& CIT    & 44.21$\pm$5.75 & 49.36$\pm$0.64 & 55.94$\pm$1.94 \\
& FLOOD  & 46.75$\pm$4.78 & 49.56$\pm$5.49 & 53.08$\pm$0.51 \\
& StableGL & 51.35$\pm$7.34 & 49.62$\pm$3.29 & 51.50$\pm$1.12 \\
& \model{}  & \textbf{59.77$\pm$1.15} & \textbf{58.85$\pm$0.51} & \textbf{56.71$\pm$1.96} \\
\midrule
\multirow{6}{*}{\texttt{Amazon}} 
& ERM    & 86.67$\pm$1.11 & 87.14$\pm$0.33 & 87.71$\pm$0.95 \\
& IRM    & 86.96$\pm$0.14 & 88.01$\pm$0.40 & 86.79$\pm$1.05 \\
& EERM   & 87.63$\pm$0.50 & 88.12$\pm$0.16 & 86.68$\pm$1.43 \\
& CIT    & 87.83$\pm$0.92 & 87.46$\pm$0.91 & 82.28$\pm$4.51 \\
& FLOOD  & 87.08$\pm$0.73 & 87.33$\pm$0.68 & 85.20$\pm$3.72 \\
& StableGL & 87.26$\pm$0.73 & 87.48$\pm$0.45 & 85.26$\pm$2.82 \\
& \model{}  & \textbf{89.67$\pm$0.39} & \textbf{88.93$\pm$0.73} & \textbf{88.74$\pm$0.57} \\
\midrule
\multirow{6}{*}{\texttt{Coauthor}} 
& ERM    & 86.60$\pm$0.91 & 87.66$\pm$0.24 & 80.48$\pm$1.21 \\
& IRM    & 87.96$\pm$0.52 & 88.75$\pm$0.58 & 78.87$\pm$1.20 \\
& EERM   & 84.68$\pm$1.28 & 85.27$\pm$1.04 & OOM \\
& CIT    & 84.48$\pm$0.40 & 86.17$\pm$1.52 & \textbf{85.60$\pm$0.84} \\
& FLOOD  & 83.63$\pm$1.13 & 85.27$\pm$1.04 & OOM \\
& StableGL & 84.48$\pm$0.40 & 86.17$\pm$1.52 & 85.93$\pm$0.76 \\
& \model{}  & \textbf{89.20$\pm$0.37} & \textbf{88.97$\pm$0.26} & 85.28$\pm$1.42 \\
\bottomrule
\end{tabular}
\label{tab::results-1}
\end{minipage}

\subsection{Performance on covariate shift (RQ1)}

We use seven single-graph real-world datasets in which \texttt{Cora}, \texttt{Citeseer}, \texttt{Amazon-photo} and \texttt{Coauthor-CS} are homophilous graphs; \texttt{Squirrel}, \texttt{Roman-empire} and \texttt{Tolokers}
are heterophilous graphs. Further details of the above datasets are provided in Appendix \ref{append:statistics-real-world}. As the above datasets have no clear domain information, we synthetically
create OOD data with \emph{soft label-leaveout}, which is inspired by \emph{label-leaveout} used in OOD detection~\cite{wu2023energybased}.
For OOD generalization, the model is not expected to predict unseen classes,  so instead of completely leaving out partial classes to the test set, we allow the training set to have a small portion of samples from those classes. 
In our experiments, we make sure that the training, validation, and test sets have different class distributions.
{By splitting the samples into groups with different class distributions but a relatively constant \textit{input-label} relationship, }\emph{Soft Label-Leaveout} simulates covariate shift {based on Definition \ref{def::convariate-shift}}.

\noindent

\begin{minipage}[t]{\linewidth}
\tiny
\centering
\captionsetup[table]{hypcap=false}
\captionof{table}{Test F1 scores on heterophilous graphs with soft label-leaveout with H2GCN as backbone.}
\begin{tabular}
{ l cc cc cc}
 \toprule
  & \multicolumn{2}{c}{\texttt{Squirrel}} &  \multicolumn{2}{c}{\texttt{Roman-empire}} & \multicolumn{2}{c}{\texttt{Tolokers}}\\ 
  \midrule
  ERM&\multicolumn{2}{c}{24.12$\pm$3.52}&\multicolumn{2}{c}{42.01$\pm$0.98}&\multicolumn{2}{c}{46.94$\pm$3.07}\\
  IRM&\multicolumn{2}{c}{28.81$\pm$2.06}&\multicolumn{2}{c}{40.58$\pm$0.60}&{47.86$\pm$2.26}\\
  EERM&\multicolumn{2}{c}{30.42$\pm$3.71}&\multicolumn{2}{c}{OOM}&\multicolumn{2}{c}{44.18$\pm$0.18}\\
  CIT&\multicolumn{2}{c}{28.93$\pm$1.80}&\multicolumn{2}{c}{45.41$\pm$3.25}&\multicolumn{2}{c}{44.26$\pm$0.00}\\
  FLOOD&\multicolumn{2}{c}{23.96$\pm$8.73}&\multicolumn{2}{c}{OOM}&\multicolumn{2}{c}{44.58$\pm$0.18}\\
  StableGL&\multicolumn{2}{c}{26.44$\pm$4.73}&\multicolumn{2}{c}{45.78$\pm$2.09}&\multicolumn{2}{c}{44.10$\pm$0.20}\\
\model{}&\multicolumn{2}{c}{\textbf{32.57$\pm$1.70}}&\multicolumn{2}{c}{\textbf{48.85$\pm$0.70}}&\multicolumn{2}{c}{\textbf{60.14$\pm$0.51}}\\
  \bottomrule
\end{tabular}
\label{tab::results-2}
\vspace{1.2em}
\captionof{table}{Test F1 scores on Facebook-100 using GNN with the best validation F1.\newline}
\begin{tabular}
{ l cc cc cc }
 \toprule
 Training&\multicolumn{6}{c}{\texttt{Johns Hopkins} + \texttt{Caltech} + \texttt{Amherst}} \\
 
  Test& \multicolumn{2}{c}{\texttt{Penn}} &  \multicolumn{2}{c}{\texttt{Brown}} & \multicolumn{2}{c}{\texttt{Texas}} \\ 
  \midrule
  ERM&\multicolumn{2}{c}{49.23$\pm$1.72}&\multicolumn{2}{c}{49.68$\pm$0.93}&\multicolumn{2}{c}{48.57$\pm$0.21}\\
  IRM&\multicolumn{2}{c}{35.26$\pm$2.40}&\multicolumn{2}{c}{46.92$\pm$5.66}&\multicolumn{2}{c}{36.86$\pm$1.64}\\
  REX&\multicolumn{2}{c}{44.77$\pm$6.48}&\multicolumn{2}{c}{42.65$\pm$7.34}&\multicolumn{2}{c}{44.05$\pm$8.88}\\
  EERM&\multicolumn{2}{c}{22.62$\pm$22.91}&\multicolumn{2}{c}{49.44$\pm$1.92}&\multicolumn{2}{c}{49.12$\pm$1.71}\\
  CIT&\multicolumn{2}{c}{44.66$\pm$6.65}&\multicolumn{2}{c}{45.26$\pm$6.21}&\multicolumn{2}{c}{42.10$\pm$8.97}\\
   FLOOD&\multicolumn{2}{c}{42.37$\pm$5.06}&\multicolumn{2}{c}{41.48$\pm$5.28}&\multicolumn{2}{c}{40.82$\pm$5.94}\\
    StableGL&\multicolumn{2}{c}{44.54$\pm$6.58}&\multicolumn{2}{c}{45.64$\pm$6.25}&\multicolumn{2}{c}{43.75$\pm$5.65}\\
\model&\multicolumn{2}{c}{\textbf{55.31$\pm$0.40}}&\multicolumn{2}{c}{\textbf{53.31$\pm$0.11}}&\multicolumn{2}{c}{\textbf{53.56$\pm$0.19}}\\
\bottomrule
\end{tabular}
\label{tab::results-facebook}
\end{minipage}

\normalsize
We compare \model{} with the baselines on the four homophilous graph datasets with soft label-leaveouts, \texttt{Cora}, \texttt{Citeseer}, \texttt{Amazon-photo}, and \texttt{Coauthor-CS}, with SGC, GCN, and GAT as GNN backbones. We report the Macro-F1 score in Table \ref{tab::results-1}. We observe that \model{} outperforms ERM across all cases with an average improvement of 4.2\%,
demonstrating its ability to mitigate the negative impact of covariate shifts. Also, \model{} beats the best baselines in most cases with an average improvement of 2.4\%, showing that it better handles covariate shifts. 

We compare \model{} with the baselines on three heterophilous graph datasets with \textit{soft label-leaveouts}, \texttt{Squirrel}, 
\texttt{Roman-empire}, and \texttt{Tolokers}. {Instead of using common GNNs that are not suited for heterophilous graphs \cite{platonov2024critical}, we use H2GCN~\cite{Zhu2020}, a  SOTA GNN well established for graph heterophily problems, as the backbone GNN for these datasets. }
We report the performance of \model{} compared to the baselines in Table \ref{tab::results-2}.
We report Micro F1 score for \texttt{Squirrel}, \texttt{Roman-empire} with multiple classes, and F1 score for \texttt{Tolokers} with binary classes.
{On average, \model{} improves the (Macro-) F1 scores by 5.8\%.} We observe that~\model{} outperforms the baselines among all datasets. 

\begin{figure}
    \centering
    \includegraphics[width=0.35\textwidth]{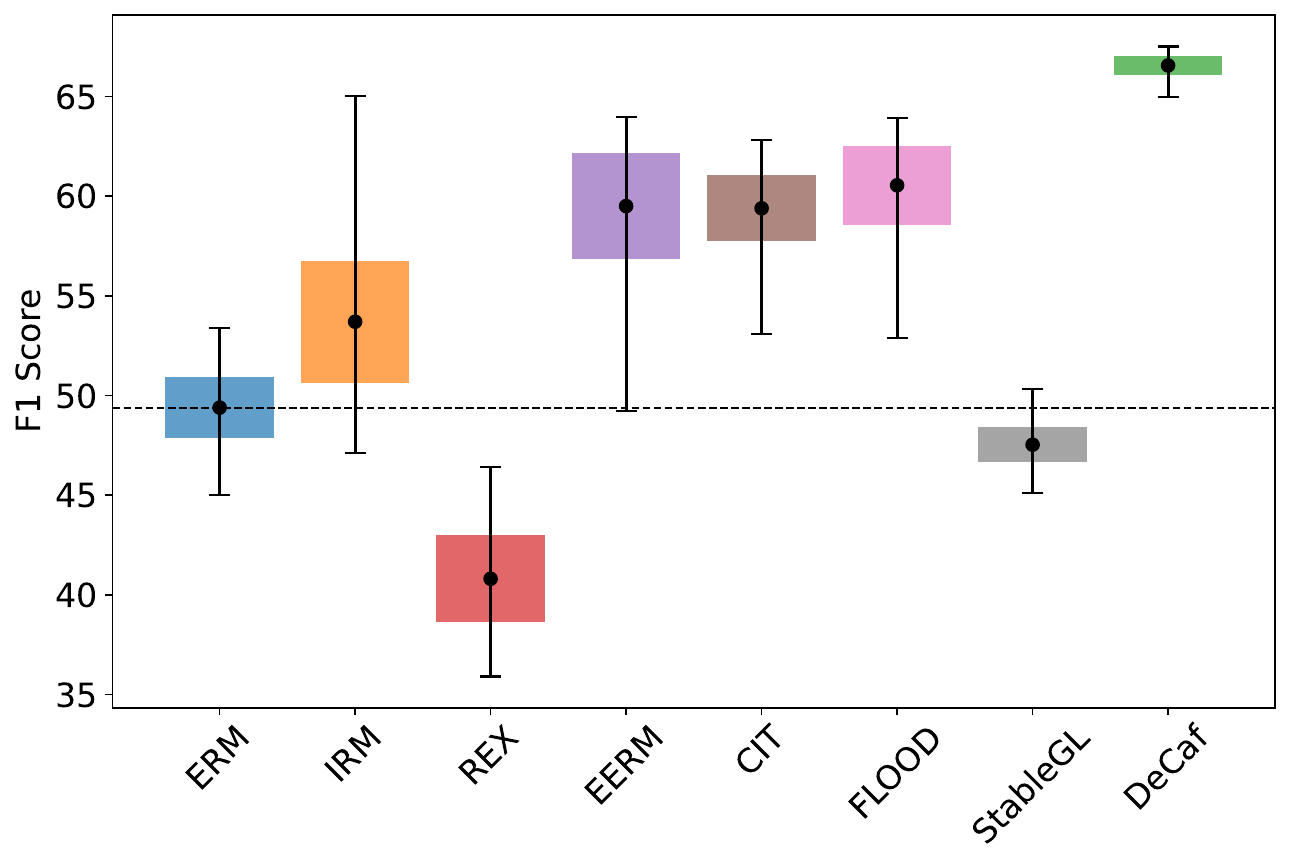}
    \caption{Distribution of F1 scores on \texttt{OGB-elliptic} of different models shown in a bar plot. The dashed line shows the mean F1 score of the ERM method.}
    \label{fig::elliptic}
\end{figure}

\subsection{Performance on concept shift (RQ2)}
\texttt{Facebook-100}~\cite{fb100} contains 100 social networks collected from universities in the United States. 
Following \cite{Wu2022Handling}, we adapt three graphs for training, two graphs for validation, and three graphs for testing. Three sets of training graphs are used. 
\texttt{OGB-elliptic}~\cite{elliplic} is a dynamic financial network dataset that contains  43 graph snapshots from different time steps.
We use the first 5 graph snapshots for training, the next 5 for validation, and the last 33 for testing. Further details of the two datasets are provided in Appendix \ref{append:statistics-real-world}. 

We first identify distribution shifts within both datasets. To assess concept shifts, we employ Hotelling's T-squared statistics to measure the disparity between node distributions sharing the same class label. {Our intuition is that if the nodes from the same class in two graphs have significantly different distributions, we can claim the graphs exhibit a concept shift with different \textit{feature/structure-label} mappings.} We present the pairwise T-squared scores comparing node feature embeddings and neighborhood representations of nodes from the first class across subgraphs for \texttt{Facebook-100} (Figure \ref{fig::t-square} (a)) and \texttt{OGB-elliptic} (Figure \ref{fig::t-square} (a)). A higher T-squared score indicates a greater dissimilarity between the two distributions. 
In Figure \ref{fig::t-square}, we observe that the subgraphs within \texttt{Facebook-100} experience significantly less extent shift in node features compared to neighborhood representations, suggesting a dominance of neighborhood shift (def. concept shift-A). 
For \texttt{OGB-elliptic}, we initially notice a gradual increase in the T-squared score across the axis, indicating that the distribution shift intensifies over time. Additionally, we observe that it experiences concept shift-A and concept shift-X more evenly, with the latter slightly more dominant.

\begin{figure}[htbp]
\centering
\begin{tabular}{cc}
  \includegraphics[height=20mm]{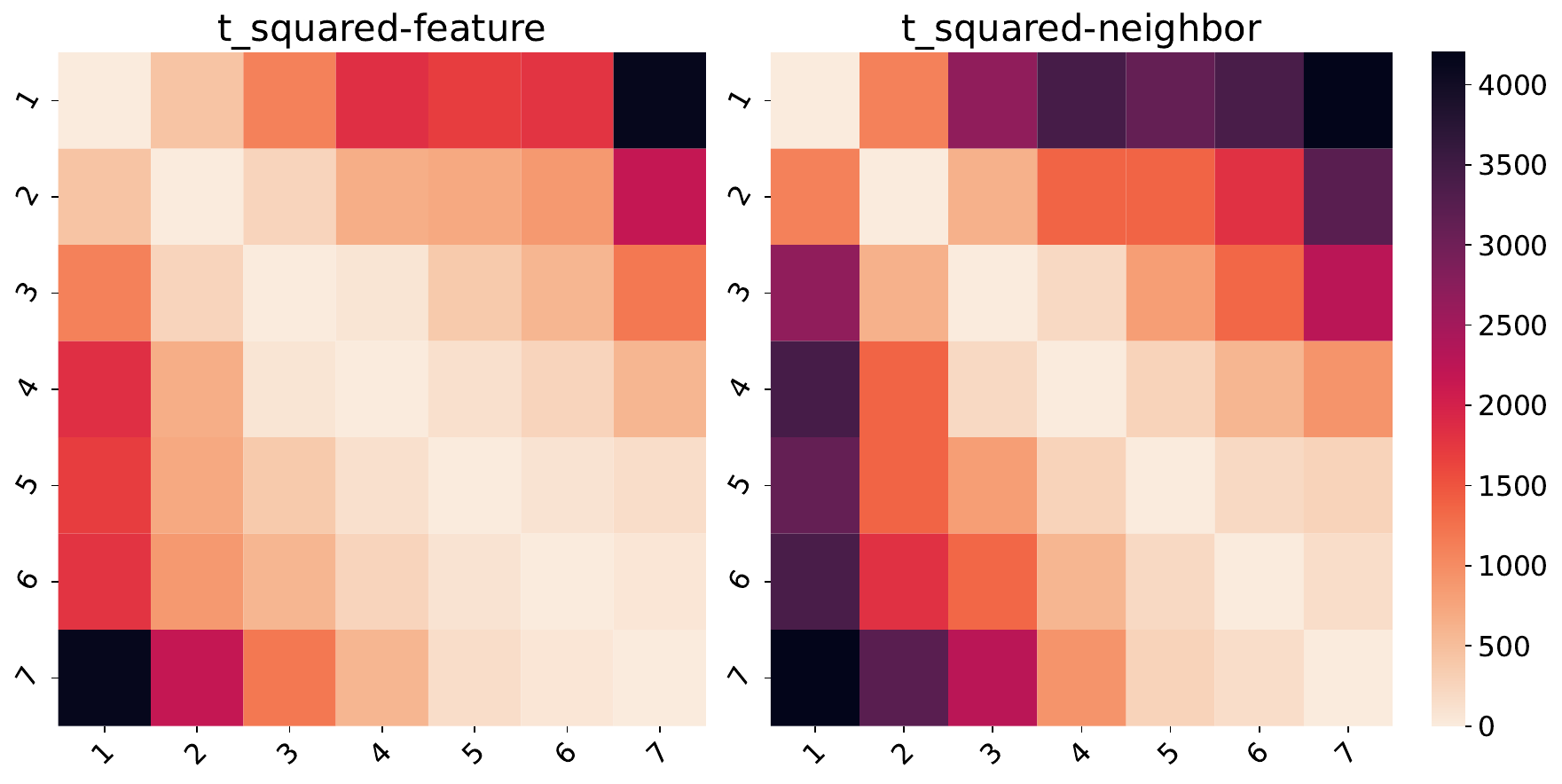}&\includegraphics[height=20mm]{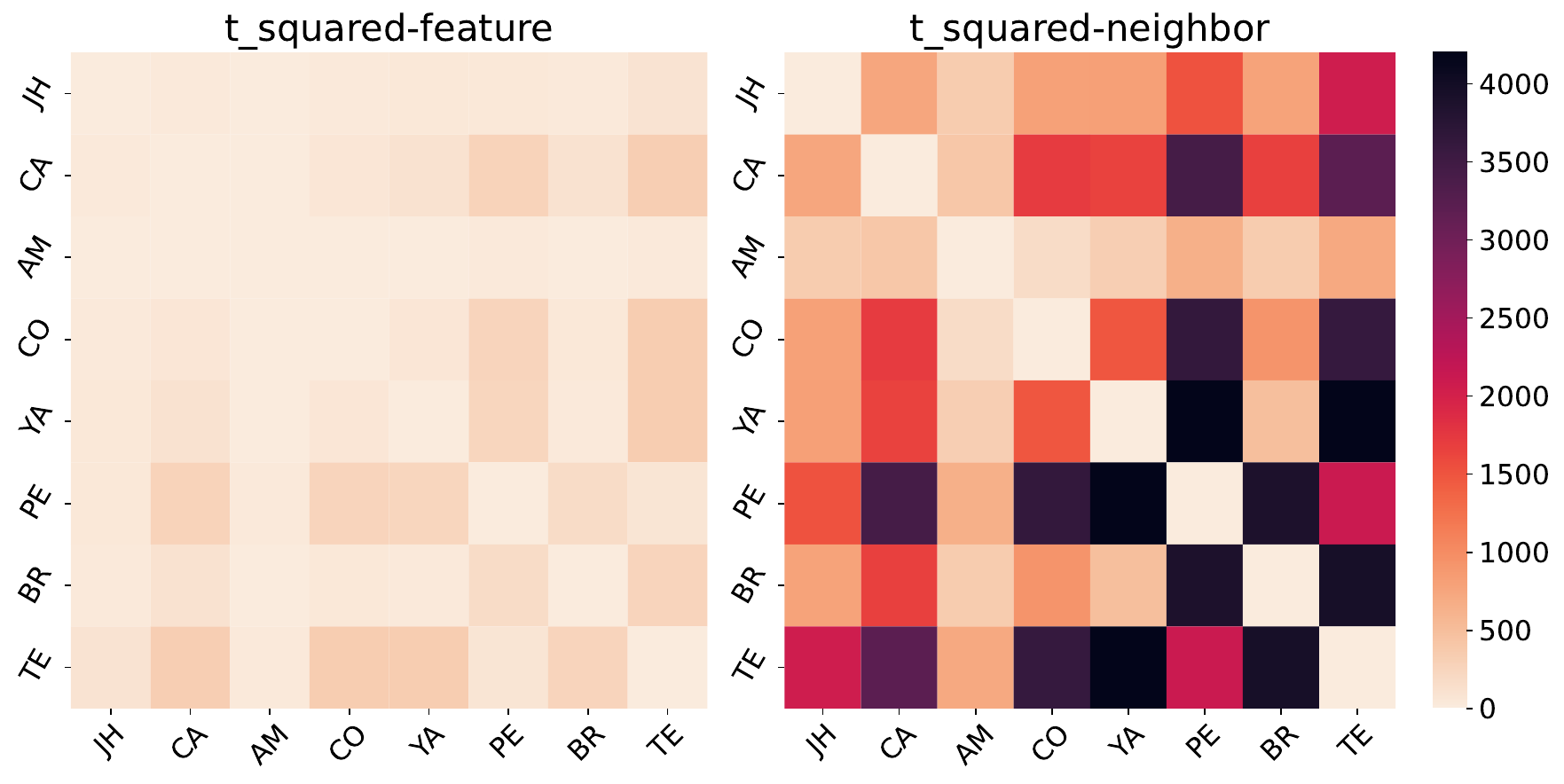} \\
(a) \texttt{OGB-elliptic} & (b) \texttt{Facebook-100}\\[2pt]
\end{tabular}
\caption{Hotelling's two-sample t-squared statistic of node feature embedding between subgraphs from different time periods in \texttt{OGB-elliptic} dataset.} 
\label{fig::t-square}
\end{figure}
\begin{table}
\caption {Test Macro-F1 scores on synthetic datasets. The best results are bold-faced.\newline}
\tiny
\centering
\begin{tabular}
{ l l cc cc cc}
 \toprule
  && \multicolumn{2}{c}{\texttt{h-feat}} &  \multicolumn{2}{c}{\texttt{qrt-feat}} & \multicolumn{2}{c}{\texttt{full-feat}}\\ 
  \midrule
\multirow{7}{*}{\rotatebox[origin=c]{90}{best GNN}}&ERM&\multicolumn{2}{c}{40.78$\pm$11.45}&\multicolumn{2}{c}{21.68$\pm$3.81}&\multicolumn{2}{c}{41.76$\pm$0.63}\\
&IRM&\multicolumn{2}{c}{38.10$\pm$2.54}&\multicolumn{2}{c}{12.86$\pm$4.47}&\multicolumn{2}{c}{13.16$\pm$6.42}\\
&EERM&\multicolumn{2}{c}{35.15$\pm$15.09}&\multicolumn{2}{c}{39.46$\pm$10.48}&\multicolumn{2}{c}{\textbf{54.71$\pm$0.74}}\\
&CIT&\multicolumn{2}{c}{30.83$\pm$25.93}&\multicolumn{2}{c}{18.37$\pm$11.45}&\multicolumn{2}{c}{38.28$\pm$1.70}\\
&FLOOD&\multicolumn{2}{c}{47.22$\pm$4.16}&\multicolumn{2}{c}{35.59$\pm$21.58}&\multicolumn{2}{c}{42.17$\pm$9.55}\\
&StableGL&\multicolumn{2}{c}{32.83$\pm$12.24}&\multicolumn{2}{c}{28.56$\pm$20.37}&\multicolumn{2}{c}{45.12$\pm$0.77}\\
&\model{}&\multicolumn{2}{c}{\textbf{57.24$\pm$13.75}}&\multicolumn{2}{c}{\textbf{49.00$\pm$2.61}}	&\multicolumn{2}{c}{{54.11$\pm$1.14}}\\
  \midrule
   \multirow{7}{*}{\rotatebox[origin=c]{90}{H2GCN}}&ERM&\multicolumn{2}{c}{49.38$\pm$3.44}&\multicolumn{2}{c}{31.72$\pm$1.60}&\multicolumn{2}{c}{66.59$\pm$1.86}\\
  &IRM&\multicolumn{2}{c}{51.17$\pm$8.82}&\multicolumn{2}{c}{33.57$\pm$4.78}&\multicolumn{2}{c}{62.04$\pm$1.45}\\
  &EERM&\multicolumn{2}{c}{49.04$\pm$3.54}&\multicolumn{2}{c}{30.04$\pm$1.72}&\multicolumn{2}{c}{65.67$\pm$1.62}\\
  &CIT&\multicolumn{2}{c}{54.33$\pm$3.90}&\multicolumn{2}{c}{30.28$\pm$3.96}&\multicolumn{2}{c}{64.78$\pm$2.64}\\
  &FLOOD&\multicolumn{2}{c}{47.18$\pm$4.97}&\multicolumn{2}{c}{29.56$\pm$1.13}&\multicolumn{2}{c}{64.65$\pm$0.79}\\
&StableGL&\multicolumn{2}{c}{39.99$\pm$2.97}&\multicolumn{2}{c}{37.44$\pm$5.21}&\multicolumn{2}{c}{63.06$\pm$4.10}\\
  &\model{}&\multicolumn{2}{c}{\textbf{55.92$\pm$5.20}}&\multicolumn{2}{c}{\textbf{44.96$\pm$1.78}}&\multicolumn{2}{c}{\textbf{67.02$\pm$1.12}}\\
  \bottomrule
\end{tabular}
\label{tab::results-synthetic}
\end{table}

For \texttt{facebook-100}, we report the F1-score of \model{} in comparison with the baselines on each of the test graphs in Table \ref{tab::results-facebook}. For each method, we use SGC, GCN, GAT, and H2GCN as baselines, and only report the results of the best-performed GNN selected with the validation set. We provide the results when using the first set of training graphs. Complete results are provided in Appendix \ref{append:complete-results}. We observe that \model{} significantly outperforms best baselines with an average improvement of 3.8\%, showing that \model{} enhances the generalizability of GNNs across different domains.

 For \texttt{OGB-elliptic},  we plot the distribution of the F1 score averaged on all of the test graph snapshots with SGC as the backbone in Figure~\ref{fig::elliptic}. The rectangles represent the standard deviation, and the error bars illustrate the range of data samples. The dashed line shows the mean F1 score of the ERM method. As shown, \model{} significantly outperforms all baselines with a higher mean F1 score while demonstrating more stable performance, indicated by a smaller standard deviation, showing its ability to handle potentially more complex temporal shifts.

We also note that unlike other datasets, where in most cases the baseline graph OOD methods {the baseline graph OOD methods either slightly improve or achieve comparable performance as ERM}, on \texttt{facebook-100} and \texttt{OGB-elliptic}, most of them degrade the performance significantly. The reasons behind this could be that the false assumptions made by those methods fail to apply to the distribution shifts in real-life scenarios and the over-confidence brought by their strategies could further degrade the generalizability of the model. 

\subsection{The impact of confounder effect (RQ3)}

We create three synthetic graphs, \texttt{h-feat}, \texttt{qtr-feat}, and \texttt{full-feat}, to simulate scenarios with and without confounder effects between node features and neighborhood representations.  
\texttt{h-feat} is created such that the node features and neighborhood representations are dependent and act as confounders to each other. In contrast, \texttt{full-feat} and \texttt{qtr-feat} are designed so that these elements are independent, with no confounder effects. Details for creating these datasets are provided in Appendix \ref{append::synthetic-datasets}. 

We compare our model against baselines on the three synthetic datasets. We report the results of the best-performed one between SGC, GCN, GAT selected with the validation set. We also incorporate H2GCN to address potential heterophily in these datasets. Besides, H2GCN separately models node features and neighborhood representations, making its performance on $ERM$ a useful benchmark for our decoupling framework. 
We present the average Macro-F1 scores in Table \ref{tab::results-synthetic}. {We report the results for the best-performed one among SGC, GCN, and GAT due to page limitation, please refer to Table \ref{tab::appendix-results-synthetic} in Appendix for the full results.} For \texttt{qtr-feat} and \texttt{full-feat}, where node features and neighborhood representations are independent, employing H2GCN on ERM significantly improves the Macro-F1 score compared to other GNN models. However, in \texttt{h-feat}, where node features and neighborhood representations are correlated, the gains are modest and are outperformed by our model with any GNN backbone. This suggests that while the separate modeling approach of H2GCN can mitigate distribution shifts, it falls short in addressing confounder effects, thus underperforming when node features and neighborhood representations are correlated.

\section{Conclusion}
In this paper, we introduce a causal decoupling framework to improve out-of-distribution generalization on node classification.  We develop a comprehensive theoretical foundation to show that, by independently estimating the treatment effects of node features and neighborhood representations, the casual decoupling framework is robust when dealing with different types of distribution shifts. Following this theoretical basis, we also propose an implementable end-to-end framework, \model{}, that leverages casual inference technologies to estimate the decoupled representations and make final predictions. We demonstrate the effectiveness and the power of \model{} for node classification on both real-world datasets and synthetic datasets under different types of distribution shifts.

\bibliographystyle{unsrt}  
\bibliography{main}  

\begin{thebibliography}{10}

\bibitem{Wu2021_GNN_survey}
Zonghan Wu, Shirui Pan, Fengwen Chen, Guodong Long, Chengqi Zhang, and Philip~S. Yu.
\newblock A comprehensive survey on graph neural networks.
\newblock {\em IEEE Transactions on Neural Networks and Learning Systems}, 32(1):4–24, January 2021.

\bibitem{Liu2023FLOOD}
Yang Liu, Xiang Ao, Fuli Feng, Yunshan Ma, Kuan Li, Tat-Seng Chua, and Qing He.
\newblock Flood: A flexible invariant learning framework for out-of-distribution generalization on graphs.
\newblock In {\em KDD}, KDD '23, page 1548–1558, New York, NY, USA, 2023. Association for Computing Machinery.

\bibitem{arjovsky2020invariant}
Martin Arjovsky, Léon Bottou, Ishaan Gulrajani, and David Lopez-Paz.
\newblock Invariant risk minimization, 2020.

\bibitem{bai2020decaug}
Haoyue Bai, Rui Sun, Lanqing Hong, Fengwei Zhou, Nanyang Ye, Han-Jia Ye, S.-H.~Gary Chan, and Zhenguo Li.
\newblock Decaug: Out-of-distribution generalization via decomposed feature representation and semantic augmentation.
\newblock {\em Proceedings of the AAAI Conference on Artificial Intelligence}, 35(8):6705--6713, May 2021.

\bibitem{Krueger2020}
David Krueger, Ethan Caballero, J{\"{o}}rn{-}Henrik Jacobsen, Amy Zhang, Jonathan Binas, R{\'{e}}mi~Le Priol, and Aaron~C. Courville.
\newblock Out-of-distribution generalization via risk extrapolation (rex).
\newblock {\em CoRR}, abs/2003.00688, 2020.

\bibitem{Sagawa2019}
Shiori Sagawa, Pang~Wei Koh, Tatsunori~B. Hashimoto, and Percy Liang.
\newblock Distributionally robust neural networks for group shifts: On the importance of regularization for worst-case generalization.
\newblock {\em CoRR}, abs/1911.08731, 2019.

\bibitem{Shen2021}
Zheyan Shen, Jiashuo Liu, Yue He, Xingxuan Zhang, Renzhe Xu, Han Yu, and Peng Cui.
\newblock Towards out-of-distribution generalization: {A} survey.
\newblock {\em CoRR}, abs/2108.13624, 2021.

\bibitem{Ye2021theoretical}
Haotian Ye, Chuanlong Xie, Tianle Cai, Ruichen Li, Zhenguo Li, and Liwei Wang.
\newblock Towards a theoretical framework of out-of-distribution generalization, 2021.

\bibitem{gui2022good}
Shurui Gui, Xiner Li, Limei Wang, and Shuiwang Ji.
\newblock Good: A graph out-of-distribution benchmark, 2022.

\bibitem{wu2019-SGC}
Felix Wu, Amauri Souza, Tianyi Zhang, Christopher Fifty, Tao Yu, and Kilian Weinberger.
\newblock Simplifying graph convolutional networks.
\newblock In {\em Proceedings of the 36th International Conference on Machine Learning}, volume~97 of {\em Proceedings of Machine Learning Research}, pages 6861--6871. PMLR, 09--15 Jun 2019.

\bibitem{Robinson1988}
P.~M. Robinson.
\newblock Root-n-consistent semiparametric regression.
\newblock {\em Econometrica}, 56(4):931--954, 1988.

\bibitem{kaddour2021causal}
Jean Kaddour, Yuchen Zhu, Qi~Liu, Matt~J Kusner, and Ricardo Silva.
\newblock Causal effect inference for structured treatments.
\newblock In M.~Ranzato, A.~Beygelzimer, Y.~Dauphin, P.S. Liang, and J.~Wortman Vaughan, editors, {\em Advances in Neural Information Processing Systems}, volume~34, pages 24841--24854. Curran Associates, Inc., 2021.

\bibitem{Wu2022Handling}
Qitian Wu, Hengrui Zhang, Junchi Yan, and David Wipf.
\newblock Handling distribution shifts on graphs: An invariance perspective.
\newblock In {\em International Conference on Learning Representations}, 2022.

\bibitem{xia2023learning}
Donglin Xia, Xiao Wang, Nian Liu, and Chuan Shi.
\newblock Learning invariant representations of graph neural networks via cluster generalization.
\newblock In {\em Thirty-seventh Conference on Neural Information Processing Systems}, 2023.

\bibitem{zhang2023StableGL}
Shengyu Zhang, Yunze Tong, Kun Kuang, Fuli Feng, Jiezhong Qiu, Jin Yu, Zhou Zhao, Hongxia Yang, Zhongfei Zhang, and Fei Wu.
\newblock Stable prediction on graphs with agnostic distribution shifts.
\newblock In {\em Proceedings of The KDD'23 Workshop on Causal Discovery, Prediction and Decision}, volume 218 of {\em Proceedings of Machine Learning Research}, pages 49--74. PMLR, 07 Aug 2023.

\bibitem{zhu2021shiftrobust}
Qi~Zhu, Natalia Ponomareva, Jiawei Han, and Bryan Perozzi.
\newblock {Shift-robust GNNs}: Overcoming the limitations of localized graph training data.
\newblock {\em Advances in Neural Information Processing Systems}, 34, 2021.

\bibitem{wu2023energybased}
Qitian Wu, Yiting Chen, Chenxiao Yang, and Junchi Yan.
\newblock Energy-based out-of-distribution detection for graph neural networks.
\newblock In {\em The Eleventh International Conference on Learning Representations}, 2023.

\bibitem{platonov2024critical}
Oleg Platonov, Denis Kuznedelev, Michael Diskin, Artem Babenko, and Liudmila Prokhorenkova.
\newblock A critical look at the evaluation of gnns under heterophily: Are we really making progress?, 2024.

\bibitem{Zhu2020}
Jiong Zhu, Yujun Yan, Lingxiao Zhao, Mark Heimann, Leman Akoglu, and Danai Koutra.
\newblock Generalizing graph neural networks beyond homophily.
\newblock {\em CoRR}, abs/2006.11468, 2020.

\bibitem{fb100}
Amanda~L. Traud, Peter~J. Mucha, and Mason~A. Porter.
\newblock Social structure of facebook networks.
\newblock {\em Physica A: Statistical Mechanics and its Applications}, 391(16):4165–4180, August 2012.

\bibitem{elliplic}
Benedek Rozemberczki, Carl Allen, and Rik Sarkar.
\newblock {Multi-Scale attributed node embedding}.
\newblock {\em Journal of Complex Networks}, 9(2):cnab014, 05 2021.

\bibitem{Muandet2013domain}
Krikamol Muandet, David Balduzzi, and Bernhard Schölkopf.
\newblock Domain generalization via invariant feature representation, 2013.

\bibitem{Ahuja2020}
Kartik Ahuja, Karthikeyan Shanmugam, Kush~R. Varshney, and Amit Dhurandhar.
\newblock Invariant risk minimization games.
\newblock {\em CoRR}, abs/2002.04692, 2020.

\bibitem{veličković2018GAT}
Petar Veličković, Guillem Cucurull, Arantxa Casanova, Adriana Romero, Pietro Liò, and Yoshua Bengio.
\newblock Graph attention networks.
\newblock In {\em International Conference on Learning Representations}, 2018.

\bibitem{Kipf2016}
Thomas~N. Kipf and Max Welling.
\newblock Semi-supervised classification with graph convolutional networks.
\newblock {\em CoRR}, abs/1609.02907, 2016.

\bibitem{Li2021OOD-GNN}
Haoyang Li, Xin Wang, Ziwei Zhang, and Wenwu Zhu.
\newblock Ood-gnn: Out-of-distribution generalized graph neural network.
\newblock {\em IEEE Transactions on Knowledge and Data Engineering}, 35(7):7328--7340, 2023.

\bibitem{fan2021generalizing}
Shaohua Fan, Xiao Wang, Chuan Shi, Peng Cui, and Bai Wang.
\newblock Generalizing graph neural networks on out-of-distribution graphs.
\newblock {\em IEEE Transactions on pattern analysis and machine intelligence}, 46, 2024.

\bibitem{Sui2021}
Yongduo Sui, Xiang Wang, Jiancan Wu, Xiangnan He, and Tat{-}Seng Chua.
\newblock Deconfounded training for graph neural networks.
\newblock {\em CoRR}, abs/2112.15089, 2021.

\bibitem{wu2022deconfounding}
Ying-Xin Wu, Xiang Wang, An~Zhang, Xia Hu, Fuli Feng, Xiangnan He, and Tat-Seng Chua.
\newblock Deconfounding to explanation evaluation in graph neural networks, 2022.

\bibitem{chen2022learning}
Yongqiang Chen, Yonggang Zhang, Yatao Bian, Han Yang, MA~KAILI, Binghui Xie, Tongliang Liu, Bo~Han, and James Cheng.
\newblock Learning causally invariant representations for out-of-distribution generalization on graphs.
\newblock In {\em Advances in Neural Information Processing Systems}, 2022.

\bibitem{Li2022}
Haoyang Li, Ziwei Zhang, Xin Wang, and Wenwu Zhu.
\newblock Learning invariant graph representations for out-of-distribution generalization.
\newblock In {\em Advances in Neural Information Processing Systems}, volume~35, pages 11828--11841. Curran Associates, Inc., 2022.

\bibitem{Wu2022rationale}
Ying{-}Xin Wu, Xiang Wang, An~Zhang, Xiangnan He, and Tat{-}Seng Chua.
\newblock Discovering invariant rationales for graph neural networks.
\newblock {\em CoRR}, abs/2201.12872, 2022.

\bibitem{buffelli2022sizeshiftreg}
Davide Buffelli, Pietro Lio, and Fabio Vandin.
\newblock Sizeshiftreg: a regularization method for improving size-generalization in graph neural networks.
\newblock In {\em Proceedings of the 36th International Conference on Neural Information Processing Systems}, 2022.

\bibitem{Fan2022debiased}
Shaohua Fan, Xiao Wang, Chuan Shi, Kun Kuang, Nian Liu, and Bai Wang.
\newblock Debiased graph neural networks with agnostic label selection bias.
\newblock {\em CoRR}, abs/2201.07708, 2022.

\bibitem{li2022survey}
Haoyang Li, Xin Wang, Ziwei Zhang, and Wenwu Zhu.
\newblock Out-of-distribution generalization on graphs: A survey, 2022.

\bibitem{Kuang2018}
Kun Kuang, Ruoxuan Xiong, Peng Cui, Susan Athey, and Bo~Li.
\newblock Stable prediction across unknown environments.
\newblock {\em CoRR}, abs/1806.06270, 2018.

\bibitem{Ma2019}
Jianxin Ma, Peng Cui, Kun Kuang, Xin Wang, and Wenwu Zhu.
\newblock Disentangled graph convolutional networks.
\newblock In Kamalika Chaudhuri and Ruslan Salakhutdinov, editors, {\em Proceedings of the 36th International Conference on Machine Learning}, volume~97 of {\em Proceedings of Machine Learning Research}, pages 4212--4221. PMLR, 09--15 Jun 2019.

\bibitem{Liu2020}
Yanbei Liu, Xiao Wang, Shu Wu, and Zhitao Xiao.
\newblock Independence promoted graph disentangled networks.
\newblock {\em Proceedings of the AAAI Conference on Artificial Intelligence}, 34(04):4916--4923, Apr. 2020.

\bibitem{Heckman2018}
James~J. Heckman, John~Eric Humphries, and Gregory Veramendi.
\newblock Returns to education: The causal effects of education on earnings, health, and smoking.
\newblock {\em Journal of Political Economy}, 126(S1):S197--S246, 2018.

\bibitem{Prosperi2020CausalIA}
Mattia C.~F. Prosperi, Yi~Guo, Matt Sperrin, James~S. Koopman, Jae Min, Xing He, Shannan~N. Rich, Mo~Wang, Iain~E. Buchan, and Jiang Bian.
\newblock Causal inference and counterfactual prediction in machine learning for actionable healthcare.
\newblock {\em Nature Machine Intelligence}, 2:369 -- 375, 2020.

\bibitem{Yao2018}
Liuyi Yao, Sheng Li, Yaliang Li, Mengdi Huai, Jing Gao, and Aidong Zhang.
\newblock Representation learning for treatment effect estimation from observational data.
\newblock In {\em Advances in Neural Information Processing Systems}, volume~31. Curran Associates, Inc., 2018.

\bibitem{chernozhukov2017doubledebiased}
Victor Chernozhukov, Denis Chetverikov, Mert Demirer, Esther Duflo, Christian Hansen, Whitney Newey, and James Robins.
\newblock Double/debiased machine learning for treatment and causal parameters, 2017.

\bibitem{Abadie2006}
Alberto Abadie and Guido~W. Imbens.
\newblock Large sample properties of matching estimators for average treatment effects.
\newblock {\em Econometrica}, 74(1):235--267, 2006.

\bibitem{Austin2011}
Peter~C. Austin.
\newblock An introduction to propensity score methods for reducing the effects of confounding in observational studies.
\newblock {\em Multivariate Behavioral Research}, 46(3):399--424, 2011.
\newblock PMID: 21818162.

\bibitem{Rubin1978}
Donald~B. Rubin.
\newblock Using multivariate matched sampling and regression adjustment to control bias in observational studies.
\newblock {\em ETS Research Bulletin Series}, 1978(2):i--33, 1978.

\bibitem{Chang2017}
Yale Chang and Jennifer Dy.
\newblock Informative subspace learning for counterfactual inference.
\newblock {\em Proceedings of the AAAI Conference on Artificial Intelligence}, 31(1), Feb. 2017.

\bibitem{Rubin1973}
Donald~B. Rubin.
\newblock Matching to remove bias in observational studies.
\newblock {\em Biometrics}, 29(1):159--183, 1973.

\bibitem{Hill2011}
Jennifer~L. Hill.
\newblock Bayesian nonparametric modeling for causal inference.
\newblock {\em Journal of Computational and Graphical Statistics}, 20(1):217--240, 2011.

\bibitem{wager2017estimation}
Stefan Wager and Susan Athey.
\newblock Estimation and inference of heterogeneous treatment effects using random forests.
\newblock {\em Journal of the American Statistical Association}, 113(523):1228--1242, 2018.

\bibitem{soren2019}
Sören~R. Künzel, Jasjeet~S. Sekhon, Peter~J. Bickel, and Bin Yu.
\newblock Metalearners for estimating heterogeneous treatment effects using machine learning.
\newblock {\em Proceedings of the National Academy of Sciences}, 116(10):4156--4165, 2019.

\bibitem{Funk2011}
Michele~Jonsson Funk, Daniel Westreich, Chris Wiesen, Til Stürmer, M.~Alan Brookhart, and Marie Davidian.
\newblock {Doubly Robust Estimation of Causal Effects}.
\newblock {\em American Journal of Epidemiology}, 173(7):761--767, 03 2011.

\bibitem{shalit2017estimating}
Uri Shalit, Fredrik~D. Johansson, and David Sontag.
\newblock Estimating individual treatment effect: generalization bounds and algorithms, 2017.

\bibitem{johansson2018learning}
Fredrik Johansson, Uri Shalit, and David Sontag.
\newblock Learning representations for counterfactual inference.
\newblock In {\em Proceedings of The 33rd International Conference on Machine Learning}, volume~48 of {\em Proceedings of Machine Learning Research}, pages 3020--3029, New York, New York, USA, 20--22 Jun 2016. PMLR.

\bibitem{shi2019adapting}
Claudia Shi, David~M. Blei, and Victor Veitch.
\newblock Adapting neural networks for the estimation of treatment effects, 2019.

\bibitem{athey2019estimating}
Susan Athey and Stefan Wager.
\newblock Estimating treatment effects with causal forests: An application.
\newblock {\em Observational Studies}, 5, 2019.

\bibitem{Bica2020}
Ioana Bica, James Jordon, and Mihaela van~der Schaar.
\newblock Estimating the effects of continuous-valued interventions using generative adversarial networks.
\newblock {\em CoRR}, abs/2002.12326, 2020.

\bibitem{kallus2018deepmatch}
Nathan Kallus.
\newblock {D}eep{M}atch: Balancing deep covariate representations for causal inference using adversarial training.
\newblock In Hal~Daumé III and Aarti Singh, editors, {\em Proceedings of the 37th International Conference on Machine Learning}, volume 119 of {\em Proceedings of Machine Learning Research}, pages 5067--5077. PMLR, 13--18 Jul 2020.

\bibitem{Shonosuke2020}
Shonosuke Harada and Hisashi Kashima.
\newblock Graphite: Estimating individual effects of graph-structured treatments.
\newblock {\em CoRR}, abs/2009.14061, 2020.

\bibitem{cora}
Andrew McCallum, Kamal Nigam, Jason D.~M. Rennie, and Kristie Seymore.
\newblock Automating the construction of internet portals with machine learning.
\newblock {\em Information Retrieval}, 3(2):127--163, 2000.

\bibitem{citeseer}
C.~Lee Giles, Kurt~D. Bollacker, and Steve Lawrence.
\newblock Citeseer: an automatic citation indexing system.
\newblock In {\em Proceedings of the Third ACM Conference on Digital Libraries}, DL '98, page 89–98, New York, NY, USA, 1998. Association for Computing Machinery.

\bibitem{coauthor-cs}
Oleksandr Shchur, Maximilian Mumme, Aleksandar Bojchevski, and Stephan Günnemann.
\newblock Pitfalls of graph neural network evaluation, 2019.

\bibitem{amazon-photo}
Julian McAuley, Christopher Targett, Qinfeng Shi, and Anton van~den Hengel.
\newblock Image-based recommendations on styles and substitutes, 2015.

\end{thebibliography}
\newpage
\section*{Appendix}
\setcounter{section}{0}
\section{Related work}
\subsection{OOD Generalization on Graphs}
\label{related-work-ood}

Abundant pioneer works~\cite{Muandet2013domain, Ahuja2020, bai2020decaug, Krueger2020, Sagawa2019, Shen2021, Ye2021theoretical} have been dedicated to addressing the out-of-distribution (OOD) generalization problem. The primary objective of OOD generalization is to train models within a specific domain and anticipate their robust generalization to test domains from potentially distinct distributions~\cite{Ye2021theoretical}. While most of these methods are tailored to deal with distribution shifts on tabular data or images, their performance is restricted when confronted with more complex data structures. With the success of Graph Neural Networks (GNNs)~\cite{veličković2018GAT, Kipf2016}, more recent research starts to address the OOD problem on graphs~\cite{Li2021OOD-GNN, fan2021generalizing, Sui2021, wu2022deconfounding, chen2022learning, Li2022, Wu2022rationale, buffelli2022sizeshiftreg, Fan2022debiased, Wu2022Handling, Liu2023FLOOD, xia2023learning, zhang2023StableGL, zhu2021shiftrobust}. OOD generalization on graphs is more intractable due to the \textit{non-euclidean} nature of graph data structures and the subtlety of different types of distribution shifts~\cite{li2022survey}. 

One consensus of OOD generalization is that, when domain knowledge is unavailable, knowledge transfer to a new domain is impossible without structural assumption on data generation processes ~\cite{Wu2022Handling}. A general assumption made by most OOD generalization methods is that there exist ``true'' correlations between input data features and their labels. These correlations remain invariant across different domains, and they aim to identify these true connections while removing the spurious ones~\cite{li2022survey}. Most of the existing graph OOD generalization methods leverage this assumption in different manners or extend it to more specific forms. 
Approaches~\cite{Li2021OOD-GNN, fan2021generalizing, Fan2022debiased} backed by \textit{confounder balancing}~\cite{Kuang2018}
from casual theories remove the correlations between casual and non-causal (spurious) aspects (in the forms of representations~\cite{Li2021OOD-GNN}, subgraphs~\cite{fan2021generalizing}, or samples~\cite{Fan2022debiased}, etc.) such that the model can focus on the casual ones. 
Some methods~\cite{Li2022, Wu2022rationale, Wu2022Handling, Liu2023FLOOD, xia2023learning, zhang2023StableGL} leverage \textit{invariance principle}~\cite{arjovsky2020invariant} from causality and assumes that there exists a portion of information in the input (e.g., subgraphs) that is invariant to label predictions across different environments. This approach often requires access to multiple environments/domains during the training process, which are not always available. 
Structural causal graphs (SCGs) have also been used for assumptions in data generation processes~\cite{Sui2021, wu2022deconfounding, chen2022learning}. 
They focus on making unbiased estimations of causal relationships via \textit{do}-calculus~\cite{Sui2021} or back-door adjustments~\cite{wu2022deconfounding}.  
Zhu \textit{et al.}~\cite{zhu2021shiftrobust} assume that training data are from a biased data generation process while test data are unbiased. 
Buffelli \textit{et al.}~\cite{buffelli2022sizeshiftreg} improves model generalization on smaller or larger test graphs by minimizing the discrepancy between the learned representation on the original graph and the coarsened graph.

Among the above-mentioned methods, most of them~\cite{Li2021OOD-GNN, fan2021generalizing, Sui2021, wu2022deconfounding, chen2022learning, Li2022, Wu2022rationale, buffelli2022sizeshiftreg} are tailored for graph-level or link-level tasks, and it is non-trivial to adapt them to node-level tasks. 
For the rest of the methods~\cite{Fan2022debiased, Wu2022Handling, Liu2023FLOOD, xia2023learning, zhang2023StableGL, zhu2021shiftrobust} that can be applied on node-level tasks, an ego-net is often the starting point for any further actions to be taken (to generate an invariant subgraph, or to learn the invariant representation of it, etc). 
Current approaches involve aggregating a central node and its neighborhood into a unified representation (e.g., GCNs~\cite{Kipf2016} or GATs~\cite{veličković2018GAT}), but they tend to overlook and filter out potential complex dependencies (or independencies) between them.
Certain GNNs such as H2GCN~\cite{Zhu2020} can model the central node and its neighbors' representations separately through the concatenation operation during the aggregation, but inferences on their relations are still missing.  Disentangled graph learning \cite{Ma2019, Liu2020} investigates latent factors that may cause the formation of an edge between a node and its neighbors. This method can also improve the OOD generalizability of GNNs, but it focuses on explaining the existence of edges with node representations, which is fundamentally different from our focus. 

\subsection{Causal Effect Estimation}
Estimating the causal effect of treatment plays a crucial role in many domains~\cite {Heckman2018, Prosperi2020CausalIA}. With the presence of the confounder, two types of studies, dubbed \textit{randomized controlled trials (RCTs)} and\textit{ observational studies}, are often conducted to achieve an unbiased estimation of casual effects~\cite{Yao2018}. 
Often, RCTs are expensive, unethical, or infeasible~\cite{Yao2018}, leaving observational studies the only option. 
The challenges of observational studies are that the counterfactual samples are missing from observational data, and the estimation of counterfactual output is often biased due to the treatment selection bias caused by the confounder effect~\cite{chernozhukov2017doubledebiased}. 
Traditional solutions include matching methods~\cite{Abadie2006, Austin2011, Rubin1978, Chang2017, Rubin1973}, tree-based methods~\cite{Hill2011, wager2017estimation}, and regression-based methods~\cite{soren2019, Funk2011}. Recently, representation learning methods~\cite{shalit2017estimating, Yao2018, johansson2018learning, shi2019adapting, athey2019estimating, Bica2020, kallus2018deepmatch} have also been widely studied and demonstrate impressive performance. The above-mentioned studies focus on binary or categorical treatments, which are difficult to apply when the treatments are real-valued and structured. GraphITE~\cite{Shonosuke2020} is proposed to deal with graph-structured treatment, which mitigates observation biases by reinforcing the independence between the treatment and covariate representations in the notion of Hilbert-Schmidt Independence Criterion (HSIC).  Robinson decomposition~\cite{kaddour2021causal} is used to identify the distinct contribution of the treatment and the covariates and generalizes it such that the treatment can be vectorized to a continuous embedding. 
\section{Pseudo code}\label{append:pseudo}
We provide the pseudo-code for \model{} in Algorithm \ref{alg::model}. 
\begin{algorithm}
	\caption{The Proposed Method \model}
 \label{alg::model}
	\begin{algorithmic}[1]
     \State \textbf{Input}: node features $\mathbf{X}\in \mathbb{R}^{n\times d}$, unnormalized adjacency matrix $A\in\{0,1\}^{n\times n}$
      \State \textbf{Output}: predicted class labels $\mathbf{Y}\in{\mathbb{I}}^{n\times k}$ 
      \While {not converged} 
      \State Evaluate $J_{p, p'}(\rho, \rho')$ based on Equation \ref{J_pp}
      \State Update $\rho \leftarrow \rho - \hat {\Delta}_{\rho}J_{p, p'}(\rho, \rho')$
        \State Update $\rho' \leftarrow \rho' - \hat {\Delta}_{\rho'}J_{p, p'}(\rho, \rho')$
      \EndWhile
   \State Evaluate $\mathbf{h}_i^{\mathbf{a}}$ and $\mathbf{m}_i^{\mathbf{a}}$
   \While {not converged} 
   \State Sample mini-batch $\{ \mathbf{x}_i, \mathbf{h}_i^{\mathbf{a}}\}^b_{i=1}$
   \State Evaluate $J_{g}(\psi^A)$
   \State Update $\theta^A \leftarrow \theta^A - \hat {\Delta}_{\theta^A}J_{m}(\theta^A)$
   \EndWhile
   
   \While {not converged} 
   \State Sample mini-batch $\{ \mathbf{x}_i, \mathbf{h}_i^{\mathbf{a}}\}^b_{i=1}$
   \State Evaluate $J_{m}(\theta^A)$, $J_{e}(\eta^X)$
   \For {step = 1 \textbf{to} \textit{step\_size}}
   \State Update $\theta^A \leftarrow \theta^A - \hat {\Delta}_{\theta^A}
   J_{m}(\theta^A)$
   \EndFor
   \State Update $\eta^A \leftarrow \eta^A - \hat {\Delta}_{\eta^A}
   J_{e}(\eta^X)$
   \EndWhile

   \While {not converged} 
   \State Sample mini-batch $\{ \mathbf{x}_i, \mathbf{h}_i^{\mathbf{a}}, \mathbf{m}_i^{\mathbf{a}}\}^b_{i=1}$
   \State Evaluate $J_{h}(\phi^X)$, $J_{e}(\eta^X)$
   \For {step = 1 \textbf{to} \textit{step\_size}}
   \State Update $\phi^X \leftarrow \phi^X - \hat {\Delta}_{\phi^X}
   J_{m}(\phi^X)$
   \EndFor
   \State Update $\eta^X \leftarrow \eta^X - \hat {\Delta}_{\eta^X}
   J_{e}(\eta^X)$
   \EndWhile

  \State Evaluate $\mathbf{E}[\mathbf{y}|C=\mathbf{a}, do(T=\mathbf{x}')]$, $\mathbf{E}[\mathbf{y}|C=\mathbf{x}, do(T=\mathbf{a}')]$ based on Equations in Section~\ref{sec:counterfact}
   \State Evaluate ${\Psi_{\mathbf{a}}}(\mathbf{a})$, ${\Psi_{\mathbf{x}}}(\mathbf{x})$ based on Equation \ref{eq:tau_a} and \ref{eq:tau_x}
  \State Evaluate $\mathbf{y}$ based on Equation \ref{eq:final_prediction}  
	\end{algorithmic} 
\end{algorithm}

\section{Important mathematical notations}
We provide a summary of important mathematical notations in Table \ref{tab::notation}.
\begin{table}
\caption{Important notations and descriptions.}
\label{sum-data}
\begin{center}
\begin{tabular}{ll}
\toprule
\textbf{Notation}  & \textbf{Description}\\
\midrule
$n, d, k, o$ & $\#$ nodes, feature size, $\#$ classes, hidden size\\
$\mathbf{x}$, $\mathbf{x}_i$, $\mathbf{X}\in\mathbb{R}^{n\times d}$ & node feature vector, node feature of $i$-th instance, node feature matrix\\
$\mathbf{y}$, $\mathbf{y}_i$, 
$\mathbf{Y}\in\mathbb{I}^{n\times k}$ & node label vector, label of $i$-th instance, node label matrix\\
$\mathbf{a}$, $\mathbf{a}_i$, $\mathbf{A}\in\mathbb{R}^{n\times o}$ & neighborhood representation, neighborhood representation, of $i$-th instance, \\&neighborhood representation matrix\\
$A\in\mathbb{I}^{n\times n}$ & unnormalized adjacency matrix\\
$p(\cdot), p'(\cdot)$ & GNN embedding layer, MLP layer\\
$\rho, \rho'$ & parameters of  $p(\cdot), p'(\cdot)$\\
$\mathbf{h}_i^{\mathbf{a}}$,  $\mathbf{m}_i^{\mathbf{a}}$ & hidden neighborhood representation of node $i$,  and its project to output space\\
$g^{X}(\cdot), g^{A}(\cdot)$ & confounder representation function for SCM-F and SCM-A\\
${\psi^X}, {\psi^A}$ & parameters for $g^{X}(\cdot), g^{A}(\cdot)$\\
$h^{X}(\cdot), h^{A}(\cdot)$ & treatment representation function for SCM-F and SCM-A\\
${\phi^X}, {\phi^A}$ & parameters for $h^{X}(\cdot), h^{A}(\cdot)$\\
$m^{X}(\cdot), m^{A}(\cdot)$ & confounder predicting function for SCM-F and SCM-A\\
${\theta^X}, {\theta^A}$ & parameters for $m^{X}(\cdot), m^{A}(\cdot)$\\
$e^{X}(\cdot), e^{A}(\cdot)$ & propensity feature function for SCM-F and SCM-A\\
${\eta^X}, {\eta^A}$ & parameters for $e^{X}(\cdot), e^{A}(\cdot)$\\
$\mathbf{x}'$, $\mathbf{a}'$ & counterfactual node feature and neighborhood representation\\

\bottomrule
\end{tabular}
\end{center}
\label{tab::notation}
\end{table} 


\section{Discussion on spillover effect}
\label{append:spillover}
 \begin{assumption}
 \label{assumption_LLN}
     The Law of Large Numbers is invoked, ensuring that the sample size is sufficiently large such that the observed distribution of the random variable z remains stable and converges towards its theoretical distribution. 
 \end{assumption}
\begin{observation}
Given $\{\mathbf{z}_j\}^{n}_{j=1}$ are sampled from a specific distribution. Under assumption \ref{assumption_1}, the expectation of the neighborhood representation $\mathbf{a}_i$ of node $i$ is dependent on $\mathbf{z}_i$ and the distribution of $z$. Under assumption \ref{assumption_LLN}, the change of other individual nodes has a negligible spillover effect on the expectation of $\mathbf{a}_i$.
\end{observation}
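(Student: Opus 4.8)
The plan is to isolate two layers of averaging---over the random edges given the latents, and over the i.i.d.\ draws of the latents themselves---and to show that the mean aggregation defining $\mathbf{a}_i$ behaves like a ratio of two $\Theta(n)$ sums. Specializing to mean aggregation over the one-hop neighborhood (the general multi-hop case is treated identically; see the last paragraph), write
\begin{equation}
\mathbf{a}_i \;=\; \frac{\sum_{j\neq i}\mathbbm{1}[j\in\mathcal{N}_i]\,\mathcal{M}_a\mathbf{z}_j}{\sum_{j\neq i}\mathbbm{1}[j\in\mathcal{N}_i]}.
\end{equation}
Taking the expectation over the edge indicators given the full latent matrix $\mathbf{Z}$ replaces each $\mathbbm{1}[j\in\mathcal{N}_i]$ by $p_{ij}=c(\lVert\mathcal{M}_s\mathbf{z}_i-\mathcal{M}_o\mathbf{z}_j\rVert_2^2+1)^{-1}$, so that $\mathbb{E}[\mathbf{a}_i\mid\mathbf{Z}]\approx N(\mathbf{Z})/D(\mathbf{Z})$ with $N=\sum_{j\neq i}p_{ij}\mathcal{M}_a\mathbf{z}_j$ and $D=\sum_{j\neq i}p_{ij}$; by Assumption \ref{assumption_LLN} the numerator and denominator each concentrate around their means, so the approximation becomes exact as $n\to\infty$.

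For the first claim I would then average over the i.i.d.\ latents $\{\mathbf{z}_j\}_{j\neq i}$. Since they are identically distributed, every term of $N$ and of $D$ shares a common mean, so by the Law of Large Numbers
\begin{equation}
\frac{N(\mathbf{Z})}{D(\mathbf{Z})}\;\xrightarrow{n\to\infty}\;\frac{\mathbb{E}_{\mathbf{z}\sim p_z}\!\left[p(\mathbf{z}_i,\mathbf{z})\,\mathcal{M}_a\mathbf{z}\right]}{\mathbb{E}_{\mathbf{z}\sim p_z}\!\left[p(\mathbf{z}_i,\mathbf{z})\right]},
\end{equation}
where $p(\mathbf{z}_i,\mathbf{z})=c(\lVert\mathcal{M}_s\mathbf{z}_i-\mathcal{M}_o\mathbf{z}\rVert_2^2+1)^{-1}$. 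The right-hand side is a function of $\mathbf{z}_i$ and of the latent law $p_z$ only, which is exactly the asserted dependence.

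For the spillover claim I would keep the edge-averaged form $N(\mathbf{Z})/D(\mathbf{Z})$ and perturb a single coordinate $\mathbf{z}_k\mapsto\mathbf{z}_k'$ with $k\neq i$. Only the $k$-th summand of $N$ and of $D$ changes, by bounded amounts $\Delta_N,\Delta_D=\mathcal{O}(1)$ (using $p_{ij}\le c$ and the bounded support/moments of $\mathbf{z}$), while $N,D=\Theta(n)$. The elementary ratio identity
\begin{equation}
\frac{N+\Delta_N}{D+\Delta_D}-\frac{N}{D}=\frac{D\,\Delta_N-N\,\Delta_D}{D(D+\Delta_D)}
\end{equation}
then yields a change of order $\mathcal{O}(n)/\mathcal{O}(n^2)=\mathcal{O}(1/n)$, which tends to $0$ as the graph grows. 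This is precisely the negligible spillover asserted.

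Finally, the definition aggregates over $\mathcal{N}_i^1\cup\cdots\cup\mathcal{N}_i^l$, so I would extend both estimates to multi-hop neighborhoods: the reachable set still has size $\Theta(n)$, and the total weight routed through any one perturbed node---summed over all paths of length at most $l$ through it---remains an $\mathcal{O}(1/n)$ fraction of the aggregate, so the same ratio bound applies. I expect the main obstacle to be making the ratio interchange rigorous: replacing $\mathbb{E}[\text{ratio}]$ by the ratio of limits requires a concentration estimate keeping $D$ bounded away from zero (a Chernoff or second-moment bound on the neighborhood size), and in the multi-hop regime one must control the correlated products of edge indicators along shared paths, which is the most delicate bookkeeping. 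Throughout, the boundedness of $p_{ij}$ and of the latent support is what guarantees that every perturbed term stays $\mathcal{O}(1)$, and hence every spillover stays $\mathcal{O}(1/n)$.
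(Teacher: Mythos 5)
The paper never actually proves this statement: Observation~1 is asserted bare in Appendix~D, immediately after the verbally stated LLN assumption, with no supporting argument anywhere in the text. So there is no paper proof to match against; your sketch supplies the justification the authors left implicit, and it is essentially sound. Writing $\mathbf{a}_i$ as a ratio $N/D$ of edge-weighted sums, passing to $p_{ij}=c\left(\Vert\mathcal{M}_s\mathbf{z}_i-\mathcal{M}_o\mathbf{z}_j\Vert_2^2+1\right)^{-1}$, and identifying the limit $\mathbb{E}_{\mathbf{z}\sim p_z}[p(\mathbf{z}_i,\mathbf{z})\mathcal{M}_a\mathbf{z}]\,/\,\mathbb{E}_{\mathbf{z}\sim p_z}[p(\mathbf{z}_i,\mathbf{z})]$ establishes both claims at once; in particular your limiting formula correctly captures the size-biasing of neighbors by $p(\mathbf{z}_i,\cdot)$, which a naive ``neighbors are i.i.d.\ from $p_z$'' argument would get wrong, and the single-coordinate ratio-perturbation identity giving an $\mathcal{O}(1/n)$ change is exactly the right mechanism for the spillover claim.

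Two caveats deserve sharper statement than you give them. First, your $D=\Theta(n)$ relies on the paper's density constant $c$ not scaling with $n$, i.e.\ on the dense regime where expected degree grows linearly; under a sparse scaling $c=\Theta(1/n)$ the denominator is $\Theta(1)$ and a single-node perturbation is \emph{not} negligible, so the observation is genuinely a dense-graph statement and your proof should say so. Second, $\Delta_N=\mathcal{O}(1)$ needs more than $p_{ij}\le c$: if the perturbed latent $\mathbf{z}_k'$ diverges along a direction where $\mathcal{M}_o\mathbf{z}_k'$ stays bounded (e.g.\ in the kernel of $\mathcal{M}_o$) while $\mathcal{M}_a\mathbf{z}_k'$ grows, then $p_{ik}\Vert\mathcal{M}_a\mathbf{z}_k'\Vert$ is unbounded; your parenthetical appeal to bounded support covers this, but it is an additional hypothesis beyond Assumption~1 and should be stated explicitly (or replaced by a condition like $\ker\mathcal{M}_o\subseteq\ker\mathcal{M}_a$). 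The remaining gaps you flag yourself---interchanging expectation with the ratio via a Chernoff bound keeping $D$ away from zero, and the correlated edge-indicator products in the multi-hop case---are real but routine in the dense regime, where the one-hop neighborhood is already $\Theta(n)$ and dominates the multi-hop aggregate.
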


\section{Analysis on distribution shifts}
\label{appendix-distribution-shifts}
\begin{proposition}
Under covariate shift, the correlation between $\mathbf{x}$ and $\mathbf{a}$ shifts (\textit{e.g.} $P^\text{train}(\mathbf{a}|\mathbf{x}) \neq P^\text{test}(\mathbf{a}|\mathbf{x})$ or $P^\text{train}(\mathbf{x}|\mathbf{a}) \neq P^\text{test}(\mathbf{x}|\mathbf{a})$). Consequently, the conditional distribution of $\mathbf{y}$ given both $\mathbf{x}$ and $\mathbf{a}$ shifts (\textit{e.g.} $P^\text{train}(\mathbf{y}|\mathbf{x},\mathbf{a}) \neq P^\text{test}(\mathbf{y}|\mathbf{x},\mathbf{a})$). However, the conditional distribution of $\mathbf{y}$ given $\mathbf{x}$ alone remain invariant (\textit{e.g.} $P^\text{train}(\mathbf{y}|\mathbf{x}) = P^\text{test}(\mathbf{y}|\mathbf{x})$). 
\end{proposition}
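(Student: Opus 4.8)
The plan is to treat all three assertions as statements about pushforwards of the latent law $P(\mathbf{z})$ through the fixed structural maps of Assumption~\ref{assumption_1}, and to track which conditionals factor through the frozen maps alone (hence are invariant) and which additionally integrate against $P(\mathbf{z})$ (hence shift). First I would fix notation: $\mathbf{x}_i = \mathcal{M}_f \mathbf{z}_i + \mathbf{b}_f$ and $\mathbf{y}_i = \mathcal{M}_y \mathbf{z}_i + \mathbf{b}_y$ are deterministic affine images of $\mathbf{z}_i$, while $\mathbf{a}_i$ aggregates $\mathcal{M}_a \mathbf{z}_j$ over neighbors $j$ drawn with connection probabilities governed by the fixed $\mathcal{M}_s,\mathcal{M}_o$. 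Since the covariate shift of Definition~\ref{def::convariate-shift} moves $P(\mathbf{z})$ while freezing every map, the whole proof becomes a bookkeeping of this dependence.

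Second I would settle the two assertions that do not require reasoning about how $\mathbf{x}$ and $\mathbf{a}$ jointly decode $\mathbf{z}$. For the invariance $P^\text{train}(\mathbf{y}\mid\mathbf{x}) = P^\text{test}(\mathbf{y}\mid\mathbf{x})$, the point is that $\mathbf{x}_i$ and $\mathbf{y}_i$ are images of the same $\mathbf{z}_i$ under matrices the shift freezes, so the feature-to-label relationship is structurally fixed; I would formalize this by splitting $\mathbf{z}_i$ into the component seen by $\mathbf{x}_i$ (the row space of $\mathcal{M}_f$) and its complement, and arguing that $P(\mathbf{y}\mid\mathbf{x})$ depends on $P(\mathbf{z})$ only through the conditional law of the complement given the seen component — a quantity the admissible shift must preserve. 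For the correlation shift I would invoke the Observation of Appendix~\ref{append:spillover}: $\mathbb{E}[\mathbf{a}_i\mid\mathbf{z}_i]$ depends on the marginal law of $\mathbf{z}$ through the neighbor-selection probabilities, so moving $P(\mathbf{z})$ while holding $\mathbf{z}_i$ (equivalently $\mathbf{x}_i$) fixed changes $\mathbb{E}[\mathbf{a}_i\mid\mathbf{x}_i]$; exhibiting one coordinate whose conditional mean moves already gives $P^\text{train}(\mathbf{a}\mid\mathbf{x}) \neq P^\text{test}(\mathbf{a}\mid\mathbf{x})$.

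Third — the crux — I would prove the shift of $P(\mathbf{y}\mid\mathbf{x},\mathbf{a})$ through the latent posterior $P(\mathbf{y}\mid\mathbf{x},\mathbf{a}) = \int P(\mathbf{y}\mid\mathbf{z})\,P(\mathbf{z}\mid\mathbf{x},\mathbf{a})\,d\mathbf{z}$, where $P(\mathbf{z}\mid\mathbf{x},\mathbf{a}) \propto P(\mathbf{a}\mid\mathbf{z})\,P(\mathbf{z}\mid\mathbf{x})$ reintroduces exactly the shifted factor that conditioning on $\mathbf{x}$ alone suppressed. Because the neighbor population forming $\mathbf{a}_i$ changes with $P(\mathbf{z})$ while the selection geometry $\mathcal{M}_s,\mathcal{M}_o$ stays fixed, the decoding of $\mathbf{a}_i$ into information about the $\mathbf{x}$-unseen directions of $\mathbf{z}_i$ is population-dependent; I would make this quantitative in the Gaussian instance of Assumption~\ref{assumption_1}, where every conditional is a closed-form linear-Gaussian object and a mean or covariance perturbation of $P(\mathbf{z})$ can be shown to move the regression of $\mathbf{y}$ on $(\mathbf{x},\mathbf{a})$. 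The main obstacle is reconciling this with the invariance of $P(\mathbf{y}\mid\mathbf{x})$: if $\mathbf{a}$ is to add label information beyond $\mathbf{x}$ then $\mathcal{M}_f$ must be genuinely non-injective, yet I simultaneously need the complement-channel conditional entering $P(\mathbf{y}\mid\mathbf{x})$ to be shift-invariant. I would resolve this by restricting the shift to structure-only latent directions — those read by $\mathcal{M}_s,\mathcal{M}_o$ but not by $\mathcal{M}_f$ — so that $P(\mathbf{y}\mid\mathbf{x})$ is untouched while $P(\mathbf{z}\mid\mathbf{x},\mathbf{a})$, and hence $P(\mathbf{y}\mid\mathbf{x},\mathbf{a})$, moves; pinning down the recoverability and independence hypotheses under which all three assertions hold at once is where essentially all the difficulty sits.
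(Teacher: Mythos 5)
Your plan diverges substantially from the paper's own argument, and it is worth being clear about both what you do better and where the attempt falls short of proving the stated proposition. The paper's proof is elementary bookkeeping on frozen maps: since $P^\text{train}(\mathbf{z})\neq P^\text{test}(\mathbf{z})$ while $\mathcal{M}_x,\mathbf{b}_x,\mathcal{M}_y,\mathbf{b}_y,\mathcal{M}_s,\mathcal{M}_o$ are shared, the marginals of $\mathbf{x}$ and $\mathbf{a}$ shift, hence the joints $P(\mathbf{x},\mathbf{a})$ and $P(\mathbf{y},\mathbf{x},\mathbf{a})$ shift, and the conditional shifts are read off by dividing joints by marginals; the invariance $P^\text{train}(\mathbf{y}|\mathbf{x})=P^\text{test}(\mathbf{y}|\mathbf{x})$ is asserted directly from the fact that $\mathbf{y}_i=\mathcal{M}_y\mathbf{z}_i+\mathbf{b}_y$ and $\mathbf{x}_i=\mathcal{M}_x\mathbf{z}_i+\mathbf{b}_x$ with invariant parameters. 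There is no latent posterior, no Gaussian computation, and no discussion of injectivity anywhere in it. Your route through $P(\mathbf{y}|\mathbf{x},\mathbf{a})=\int P(\mathbf{y}|\mathbf{z})P(\mathbf{z}|\mathbf{x},\mathbf{a})\,d\mathbf{z}$, and your use of the spillover observation to exhibit a population-dependent $\mathbb{E}[\mathbf{a}_i|\mathbf{x}_i]$, is in fact sharper on the conditional-shift claims than the paper's inference ``the joint changes, therefore the conditionals change,'' which is a non sequitur in general (a joint can change purely through its marginals).

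The genuine gap is in the last step: the proposition quantifies over all covariate shifts in the sense of Definition~2 --- an arbitrary change of $P(\mathbf{z})$ with every structural map frozen --- whereas you end by \emph{restricting} the admissible shifts to structure-only latent directions and explicitly leave open ``the recoverability and independence hypotheses under which all three assertions hold at once.'' What that delivers is at best an existence statement (there are shifts under which $P(\mathbf{y}|\mathbf{x},\mathbf{a})$ moves while $P(\mathbf{y}|\mathbf{x})$ does not), which is a weaker claim than the one posed. To be fair, the tension you isolated is real: in the noiseless SCM of Assumption~1, if $\mathcal{M}_x$ is injective then $\mathbf{y}$ is deterministic given $\mathbf{x}$, so $P(\mathbf{y}|\mathbf{x},\mathbf{a})=P(\mathbf{y}|\mathbf{x})$ and the middle claim fails; if $\mathcal{M}_x$ is non-injective, the invariance of $P(\mathbf{y}|\mathbf{x})$ requires precisely the condition you name --- that the conditional law of the $\mathbf{x}$-unseen latent component given the seen one be preserved --- which Definition~2 does not guarantee. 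The paper never engages with this dichotomy; its proof simply asserts the invariance from the shared matrices and asserts the shifts from the changed joints. So your diagnosis is a correct criticism of the proposition's generality, but as a proof of the statement as written the attempt is incomplete, and it should either (i) add the injectivity/recoverability hypothesis explicitly and prove all three claims under it, or (ii) present the restricted-shift construction honestly as a counterexample-driven weakening rather than as the proposition itself.
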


\begin{proof}
Under covariate shift, we have:
\[
P^\text{train}(\mathbf{z}) \neq P^\text{test}(\mathbf{z})
\]
while the transformation matrices and bias vectors for generating \(\mathbf{x}\), \(\mathbf{y}\), and \(\mathbf{a}\) remain the same between training and testing datasets:
\[
\mathcal{M}_{x}^\text{train} = \mathcal{M}_{x}^\text{test}, \quad \mathbf{b}_{x}^\text{train} = \mathbf{b}_{x}^\text{test}
\]
\[
\mathcal{M}_{y}^\text{train} = \mathcal{M}_{y}^\text{test}, \quad \mathbf{b}_{y}^\text{train} = \mathbf{b}_{y}^\text{test}
\]
\[
\mathcal{M}_{s}^\text{train} = \mathcal{M}_{s}^\text{test}, \quad \mathcal{M}_{o}^\text{train} = \mathcal{M}_{o}^\text{test}
\]

The generation processes are:
\[
\mathbf{x}_i = \mathcal{M}_{x} \mathbf{z}_i + \mathbf{b}_{x}
\]
\[
\mathbf{y}_i = \mathcal{M}_{y} \mathbf{z}_i + \mathbf{b}_{y}
\]
\[
p(A_{ij} = 1) = c \left( \| \mathcal{M}_{s} \mathbf{z}_i - \mathcal{M}_{o} \mathbf{z}_j \|^2_2 + 1 \right)^{-1}
\]
where \(\mathbf{a}_i\) is defined as:
\[
\mathbf{a}_i = \text{agg}(\{ \mathcal{M}_a \mathbf{z}_j \}) \quad \text{where} \quad j \in \mathcal{N}_i^1 \cup \mathcal{N}_i^2 \cup \ldots \cup \mathcal{N}_i^l
\]

Given that \(P(\mathbf{z})\) changes, the marginal distribution of \(\mathbf{x}\) changes accordingly:
\[
P^\text{train}(\mathbf{x}) \neq P^\text{test}(\mathbf{x})
\]

Since \(\mathbf{a}_i\) is an aggregation of transformed neighbors' latent vectors \(\mathbf{z}_j\), and \(P(\mathbf{z})\) changes, the distribution of \(\mathbf{a}_i\) also changes:
\[
P^\text{train}(\mathbf{a}) \neq P^\text{test}(\mathbf{a})
\]

To show that the joint distribution \(P(\mathbf{x}, \mathbf{a})\) changes, consider that \(\mathbf{a}\) is derived from \(\mathbf{z}\) through a different transformation \(\mathcal{M}_a\) and aggregation function. Since \(\mathbf{x}\) and \(\mathbf{a}\) are both functions of \(\mathbf{z}\), any change in the distribution of \(\mathbf{z}\) will induce changes in both \(P(\mathbf{x})\) and \(P(\mathbf{a})\). Given that \(P(\mathbf{x})\) and \(P(\mathbf{a})\) change independently due to the change in \(P(\mathbf{z})\), the joint distribution \(P(\mathbf{x}, \mathbf{a})\) must also change because the dependencies between \(\mathbf{x}\) and \(\mathbf{a}\) are functions of \(\mathbf{z}\):
\[
P^\text{train}(\mathbf{x}, \mathbf{a}) \neq P^\text{test}(\mathbf{x}, \mathbf{a})
\]

This implies that:
\[
P^\text{train}(\mathbf{a}|\mathbf{x}) = \frac{P^\text{train}(\mathbf{x}, \mathbf{a})}{P^\text{train}(\mathbf{x})} \neq \frac{P^\text{test}(\mathbf{x}, \mathbf{a})}{P^\text{test}(\mathbf{x})} = P^\text{test}(\mathbf{a}|\mathbf{x})
\]
\[
P^\text{train}(\mathbf{x}|\mathbf{a}) = \frac{P^\text{train}(\mathbf{x}, \mathbf{a})}{P^\text{train}(\mathbf{a})} \neq \frac{P^\text{test}(\mathbf{x}, \mathbf{a})}{P^\text{test}(\mathbf{a})} = P^\text{test}(\mathbf{x}|\mathbf{a})
\]

The generation process for \(\mathbf{y}\) is:
\[
\mathbf{y}_i = \mathcal{M}_{y} \mathbf{z}_i + \mathbf{b}_{y}
\]

Since \(\mathcal{M}_{y}\) and \(\mathbf{b}_{y}\) are the same across training and testing, and \(\mathbf{z}_i\) influences \(\mathbf{x}_i\) and \(\mathbf{a}_i\) through \(\mathcal{M}_{x}\), \(\mathcal{M}_{s}\), \(\mathcal{M}_{o}\), and \(\mathcal{M}_{a}\), the change in \(P(\mathbf{z})\) affects \(P(\mathbf{x})\) and \(P(\mathbf{a})\). Consequently, the joint distribution \(P(\mathbf{y}, \mathbf{x}, \mathbf{a})\) changes:
\[
P^\text{train}(\mathbf{y}, \mathbf{x}, \mathbf{a}) \neq P^\text{test}(\mathbf{y}, \mathbf{x}, \mathbf{a})
\]

Thus, the conditional distribution:
\[
P^\text{train}(\mathbf{y}|\mathbf{x}, \mathbf{a}) = \frac{P^\text{train}(\mathbf{y}, \mathbf{x}, \mathbf{a})}{P^\text{train}(\mathbf{x}, \mathbf{a})} \neq \frac{P^\text{test}(\mathbf{y}, \mathbf{x}, \mathbf{a})}{P^\text{test}(\mathbf{x}, \mathbf{a})} = P^\text{test}(\mathbf{y}|\mathbf{x}, \mathbf{a})
\]

Since \(\mathbf{y}_i = \mathcal{M}_{y} \mathbf{z}_i + \mathbf{b}_{y}\) and \(\mathcal{M}_{y}\) and \(\mathbf{b}_{y}\) are invariant, the conditional distribution \(P(\mathbf{y}|\mathbf{z})\) remains invariant:
\[
P^\text{train}(\mathbf{y}|\mathbf{z}) = P^\text{test}(\mathbf{y}|\mathbf{z})
\]

Given that \(\mathbf{x}_i = \mathcal{M}_{x} \mathbf{z}_i + \mathbf{b}_{x}\) and \(\mathcal{M}_{x}\) and \(\mathbf{b}_{x}\) are invariant, the conditional distribution \(P(\mathbf{y}|\mathbf{x})\) also remains invariant:
\[
P^\text{train}(\mathbf{y}|\mathbf{x}) = P^\text{test}(\mathbf{y}|\mathbf{x})
\]

\end{proof}

\begin{proposition}
Under concept shift-F, the dependencies between $\mathbf{x}$ and $\mathbf{a}$ shifts (\textit{e.g.} $P^\text{train}(\mathbf{a}|\mathbf{x}) \neq P^\text{test}(\mathbf{a}|\mathbf{x})$ or $P^\text{train}(\mathbf{x}|\mathbf{a}) \neq P^\text{test}(\mathbf{x}|\mathbf{a})$), and the conditional distribution of $Y$ given $F$ shifts (\textit{e.g.} $P^\text{train}(\mathbf{y}|\mathbf{x}) \neq P^\text{test}(\mathbf{y}|\mathbf{x})$). Consequently, the conditional distribution of $\mathbf{y}$ given both $\mathbf{x}$ and $\mathbf{a}$ shifts (\textit{e.g.} $P^\text{train}(\mathbf{y}|\mathbf{x},\mathbf{a}) \neq P^\text{test}(\mathbf{y}|\mathbf{x},\mathbf{a})$). However, the conditional distribution of $Y$ given $A$ alone remain invariant (\textit{e.g.}  $P^\text{train}(\mathbf{y}|\mathbf{a}) = P^\text{test}(\mathbf{y}|\mathbf{a})$). 
\end{proposition}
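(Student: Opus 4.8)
The plan is to mirror the structure of the proof of the preceding covariate-shift proposition, but to exploit the crucial asymmetry introduced by concept shift-X: here only the feature map $(\mathcal{M}_x, \mathbf{b}_x)$ changes across domains, while $P(\mathbf{z})$ and all of $\mathcal{M}_y, \mathbf{b}_y, \mathcal{M}_s, \mathcal{M}_o, \mathcal{M}_a$ are held fixed. I would organize the argument around a single dichotomy: which laws are pushforwards of invariant maps applied to the invariant latent distribution, and which are not.

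First I would establish the invariances. Since $\mathbf{y}_i = \mathcal{M}_y \mathbf{z}_i + \mathbf{b}_y$ and, by the spillover observation in Appendix \ref{append:spillover}, $\mathbf{a}_i$ is determined by $\mathbf{z}_i$ together with the fixed law of $\mathbf{z}$ through the invariant map $\mathcal{M}_a$ and aggregation, both $\mathbf{y}$ and $\mathbf{a}$ are obtained from the invariant distribution $P(\mathbf{z})$ via maps that do not change under concept shift-X. Hence the joint law $P(\mathbf{y}, \mathbf{a})$ is an invariant pushforward, and so is its marginal $P(\mathbf{a})$. This immediately yields the fourth claim, $P^\text{train}(\mathbf{y}|\mathbf{a}) = P^\text{test}(\mathbf{y}|\mathbf{a})$, as a ratio of two invariant quantities; note this step needs neither recoverability nor non-degeneracy, since both the prior and the relevant maps are untouched. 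It also records the fact, needed below, that $P(\mathbf{a})$ alone is invariant.

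Next I would handle the shifting quantities. Because $\mathbf{x}_i = \mathcal{M}_x \mathbf{z}_i + \mathbf{b}_x$ with $(\mathcal{M}_x, \mathbf{b}_x)$ changing, the marginal $P(\mathbf{x})$ and the joint $P(\mathbf{x}, \mathbf{a})$ are pushforwards of the fixed $P(\mathbf{z})$ under a map that does change, so both shift. The cleanest route to the first claim $P^\text{train}(\mathbf{x}|\mathbf{a}) \neq P^\text{test}(\mathbf{x}|\mathbf{a})$ is then to write $P(\mathbf{x}|\mathbf{a}) = P(\mathbf{x}, \mathbf{a})/P(\mathbf{a})$ and invoke the previous paragraph: the denominator is invariant while the numerator moves, forcing the conditional to move, and sidestepping any difficulty of both numerator and denominator changing at once. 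For the second claim I would argue through the latent posterior, $P(\mathbf{y}|\mathbf{x}) = \int P(\mathbf{y}|\mathbf{z}) P(\mathbf{z}|\mathbf{x})\, d\mathbf{z}$ with $P(\mathbf{z}|\mathbf{x}) \propto P(\mathbf{x}|\mathbf{z}) P(\mathbf{z})$; since $P(\mathbf{x}|\mathbf{z})$ is governed by the mutable $(\mathcal{M}_x, \mathbf{b}_x)$ while $P(\mathbf{y}|\mathbf{z})$ and $P(\mathbf{z})$ are fixed, the posterior over $\mathbf{z}$ and hence $P(\mathbf{y}|\mathbf{x})$ shift. The third claim then follows by the same posterior argument with the conditioning set enlarged: using the conditional-independence factorization $P(\mathbf{x},\mathbf{a}|\mathbf{z}) = P(\mathbf{x}|\mathbf{z})P(\mathbf{a}|\mathbf{z})$, we have $P(\mathbf{z}|\mathbf{x}, \mathbf{a}) \propto P(\mathbf{x}|\mathbf{z}) P(\mathbf{a}|\mathbf{z}) P(\mathbf{z})$, which still depends on the mutable $P(\mathbf{x}|\mathbf{z})$, so $P(\mathbf{y}|\mathbf{x}, \mathbf{a})$ moves as well.

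The main obstacle will be converting ``the relevant map changes'' into a genuine strict inequality for the claims involving $\mathbf{y}\mid\mathbf{x}$ and $\mathbf{y}\mid\mathbf{x},\mathbf{a}$, rather than merely asserting the laws ``may differ.'' For conditionals whose denominator is invariant (such as $\mathbf{x}\mid\mathbf{a}$) the inequality is automatic, but for $\mathbf{y}\mid\mathbf{x}$ one can construct degenerate changes of $\mathcal{M}_x$ under which the conditional is unaffected — for example, if the row space of $\mathcal{M}_x$ always contains that of $\mathcal{M}_y$, making $\mathbf{y}$ a fixed deterministic function of $\mathbf{x}$ regardless of the feature map. I would therefore state the shift claims under a mild non-degeneracy condition on $(\mathcal{M}_x^\text{train}, \mathbf{b}_x^\text{train}) \neq (\mathcal{M}_x^\text{test}, \mathbf{b}_x^\text{test})$ — namely that the induced posterior $P(\mathbf{z}|\mathbf{x})$ genuinely differs on a set of positive measure — and verify that precisely this condition makes the posterior-over-$\mathbf{z}$ step yield a strict shift. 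The second point I would check carefully is the consistency of the factorization $P(\mathbf{x},\mathbf{a}|\mathbf{z}) = P(\mathbf{x}|\mathbf{z})P(\mathbf{a}|\mathbf{z})$ with Assumption \ref{assumption_1} and the spillover observation, since treating $\mathbf{a}_i$ as effectively a function of $\mathbf{z}_i$ alone is what justifies that conditional independence in the third claim.
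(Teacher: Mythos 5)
Your proposal is correct and follows the same overall skeleton as the paper's proof in Appendix~\ref{appendix-distribution-shifts}: classify each law as a pushforward of the invariant $P(\mathbf{z})$ under either invariant or mutable maps, derive invariance of $P(\mathbf{y},\mathbf{a})$ (hence $P(\mathbf{y}|\mathbf{a})$) from the former, and derive the shifts from the latter via ratio identities. Where you genuinely depart from the paper is in the mechanism for the shift claims, and your version is the more careful one. The paper establishes $P^\text{train}(\mathbf{a}|\mathbf{x}) \neq P^\text{test}(\mathbf{a}|\mathbf{x})$ by dividing a changed joint $P(\mathbf{x},\mathbf{a})$ by a changed marginal $P(\mathbf{x})$, which is a non sequitur as stated (two changed quantities can have an unchanged ratio); your route through $P(\mathbf{x}|\mathbf{a}) = P(\mathbf{x},\mathbf{a})/P(\mathbf{a})$ with the \emph{invariant} denominator $P(\mathbf{a})$ makes the conditional shift actually follow, and since the proposition's first claim is a disjunction, proving that one branch suffices. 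Similarly, for $P(\mathbf{y}|\mathbf{x})$ and $P(\mathbf{y}|\mathbf{x},\mathbf{a})$ the paper simply asserts that the changed feature map changes the joint and hence the conditional, whereas your latent-posterior argument $P(\mathbf{y}|\mathbf{x}) = \int P(\mathbf{y}|\mathbf{z})P(\mathbf{z}|\mathbf{x})\,d\mathbf{z}$ with $P(\mathbf{z}|\mathbf{x}) \propto P(\mathbf{x}|\mathbf{z})P(\mathbf{z})$ isolates exactly where the mutable $(\mathcal{M}_x,\mathbf{b}_x)$ enters. Your flagged degeneracy concern is real and applies equally to the paper's own proof: there exist parameter changes (e.g., composing $\mathcal{M}_x$ with a symmetry of $P(\mathbf{z})$ acting inside the kernel of $\mathcal{M}_y$) under which $(\mathcal{M}_x^\text{train},\mathbf{b}_x^\text{train}) \neq (\mathcal{M}_x^\text{test},\mathbf{b}_x^\text{test})$ yet no observable conditional moves, so the strict inequalities hold only generically; the paper leaves this implicit, while your explicit non-degeneracy condition on the induced posterior $P(\mathbf{z}|\mathbf{x})$ is the honest way to state the result. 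Your final consistency check on the factorization $P(\mathbf{x},\mathbf{a}|\mathbf{z}) = P(\mathbf{x}|\mathbf{z})P(\mathbf{a}|\mathbf{z})$ is also sound: since $\mathbf{x}_i$ is deterministic given $\mathbf{z}_i$ under Assumption~\ref{assumption_1}, and the residual randomness in $\mathbf{a}_i$ (neighbor sampling, treated as depending on $\mathbf{z}_i$ alone via the spillover observation in Appendix~\ref{append:spillover}) is independent of it, the factorization holds. In short: same decomposition as the paper, but your execution repairs two steps the paper glosses over.
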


\begin{proof}
Under concept shift-F, we have:
\[
P^\text{train}(\mathbf{z}) = P^\text{test}(\mathbf{z})
\]
while the transformation matrices and bias vectors for generating \(\mathbf{x}\) change, but those for \(\mathbf{a}\) and \(\mathbf{y}\) remain the same between training and testing datasets:
\[
\mathcal{M}_{x}^\text{train} \neq \mathcal{M}_{x}^\text{test}, \quad \mathbf{b}_{x}^\text{train} \neq \mathbf{b}_{x}^\text{test}
\]
\[
\mathcal{M}_{y}^\text{train} = \mathcal{M}_{y}^\text{test}, \quad \mathbf{b}_{y}^\text{train} = \mathbf{b}_{y}^\text{test}
\]
\[
\mathcal{M}_{s}^\text{train} = \mathcal{M}_{s}^\text{test}, \quad \mathcal{M}_{o}^\text{train} = \mathcal{M}_{o}^\text{test}
\]

The generation processes are:
\[
\mathbf{x}_i^\text{train} = \mathcal{M}_{x}^\text{train} \mathbf{z}_i + \mathbf{b}_{x}^\text{train}
\]
\[
\mathbf{x}_i^\text{test} = \mathcal{M}_{x}^\text{test} \mathbf{z}_i + \mathbf{b}_{x}^\text{test}
\]
\[
\mathbf{y}_i = \mathcal{M}_{y} \mathbf{z}_i + \mathbf{b}_{y}
\]
\[
p(A_{ij} = 1) = c \left( \| \mathcal{M}_{s} \mathbf{z}_i - \mathcal{M}_{o} \mathbf{z}_j \|^2_2 + 1 \right)^{-1}
\]
where \(\mathbf{a}_i\) is defined as:
\[
\mathbf{a}_i = \text{agg}(\{ \mathcal{M}_a \mathbf{z}_j \}) \quad \text{where} \quad j \in \mathcal{N}_i^1 \cup \mathcal{N}_i^2 \cup \ldots \cup \mathcal{N}_i^l
\]

Given that \(P(\mathbf{z})\) remains the same, the marginal distribution of \(\mathbf{a}\) does not change:
\[
P^\text{train}(\mathbf{a}) = P^\text{test}(\mathbf{a})
\]

However, the change in the transformation matrix \(\mathcal{M}_{x}\) and bias vector \(\mathbf{b}_{x}\) implies that the marginal distribution of \(\mathbf{x}\) changes:
\[
P^\text{train}(\mathbf{x}) \neq P^\text{test}(\mathbf{x})
\]

Since \(\mathbf{a}\) is an aggregation of transformed neighbors' latent vectors \(\mathbf{z}_j\), and the generation process for \(\mathbf{a}\) remains unchanged, the joint distribution \(P(\mathbf{x}, \mathbf{a})\) changes due to the change in \(P(\mathbf{x})\):
\[
P^\text{train}(\mathbf{x}, \mathbf{a}) \neq P^\text{test}(\mathbf{x}, \mathbf{a})
\]

This implies that:
\[
P^\text{train}(\mathbf{a}|\mathbf{x}) = \frac{P^\text{train}(\mathbf{x}, \mathbf{a})}{P^\text{train}(\mathbf{x})} \neq \frac{P^\text{test}(\mathbf{x}, \mathbf{a})}{P^\text{test}(\mathbf{x})} = P^\text{test}(\mathbf{a}|\mathbf{x})
\]
\[
P^\text{train}(\mathbf{x}|\mathbf{a}) = \frac{P^\text{train}(\mathbf{x}, \mathbf{a})}{P^\text{train}(\mathbf{a})} \neq \frac{P^\text{test}(\mathbf{x}, \mathbf{a})}{P^\text{test}(\mathbf{a})} = P^\text{test}(\mathbf{x}|\mathbf{a})
\]

The generation process for \(\mathbf{y}\) is:
\[
\mathbf{y}_i = \mathcal{M}_{y} \mathbf{z}_i + \mathbf{b}_{y}
\]

Since \(\mathcal{M}_{y}\) and \(\mathbf{b}_{y}\) are the same across training and testing, and \(\mathbf{z}_i\) influences \(\mathbf{x}_i\) through \(\mathcal{M}_{x}\), the change in \(\mathcal{M}_{x}\) affects the joint distribution \(P(\mathbf{y}, \mathbf{x})\):
\[
P^\text{train}(\mathbf{y}, \mathbf{x}) \neq P^\text{test}(\mathbf{y}, \mathbf{x})
\]

Thus, the conditional distribution:
\[
P^\text{train}(\mathbf{y}|\mathbf{x}) = \frac{P^\text{train}(\mathbf{y}, \mathbf{x})}{P^\text{train}(\mathbf{x})} \neq \frac{P^\text{test}(\mathbf{y}, \mathbf{x})}{P^\text{test}(\mathbf{x})} = P^\text{test}(\mathbf{y}|\mathbf{x})
\]

The change in the joint distribution \(P(\mathbf{x}, \mathbf{a})\) implies a change in the joint distribution \(P(\mathbf{y}, \mathbf{x}, \mathbf{a})\) since \(\mathbf{y}\) depends on both \(\mathbf{x}\) and \(\mathbf{a}\):
\[
P^\text{train}(\mathbf{y}, \mathbf{x}, \mathbf{a}) \neq P^\text{test}(\mathbf{y}, \mathbf{x}, \mathbf{a})
\]

Thus, the conditional distribution:
\[
P^\text{train}(\mathbf{y}|\mathbf{x}, \mathbf{a}) = \frac{P^\text{train}(\mathbf{y}, \mathbf{x}, \mathbf{a})}{P^\text{train}(\mathbf{x}, \mathbf{a})} \neq \frac{P^\text{test}(\mathbf{y}, \mathbf{x}, \mathbf{a})}{P^\text{test}(\mathbf{x}, \mathbf{a})} = P^\text{test}(\mathbf{y}|\mathbf{x}, \mathbf{a})
\]

Since \(\mathbf{y}\) and \(\mathbf{a}\) are both derived from \(\mathbf{z}\) through invariant transformation matrices and bias vectors, and \(P(\mathbf{z})\) remains the same, the joint distribution \(P(\mathbf{y}, \mathbf{a})\) remains unchanged:
\[
P^\text{train}(\mathbf{y}, \mathbf{a}) = P^\text{test}(\mathbf{y}, \mathbf{a})
\]

Thus, the conditional distribution:
\[
P^\text{train}(\mathbf{y}|\mathbf{a}) = \frac{P^\text{train}(\mathbf{y}, \mathbf{a})}{P^\text{train}(\mathbf{a})} = \frac{P^\text{test}(\mathbf{y}, \mathbf{a})}{P^\text{test}(\mathbf{a})} = P^\text{test}(\mathbf{y}|\mathbf{a})
\]
\end{proof}

\begin{proposition}
Under concept shift-A, the dependencies between $\mathbf{x}$ and $\mathbf{a}$ shifts (\textit{e.g.} $P^\text{train}(\mathbf{a}|\mathbf{x}) \neq P^\text{test}(\mathbf{a}|\mathbf{x})$ or $P^\text{train}(\mathbf{x}|\mathbf{a}) \neq P^\text{test}(\mathbf{x}|\mathbf{a})$), and the conditional distribution of $\mathbf{y}$ given $A$ shifts (\textit{e.g.} $P^\text{train}(\mathbf{y}|\mathbf{a}) \neq P^\text{test}(\mathbf{y}|\mathbf{a})$). Consequently, the conditional distribution of $Y$ given both $\mathbf{x}$ and $\mathbf{a}$ shifts (\textit{e.g.} $P^\text{train}(\mathbf{y}|\mathbf{x},\mathbf{a}) \neq P^\text{test}(\mathbf{y}|\mathbf{x},\mathbf{a})$). However, the conditional distribution of $\mathbf{y}$ given $\mathbf{x}$ alone remain invariant (\textit{e.g.}  $P^\text{train}(\mathbf{y}|\mathbf{x}) = P^\text{test}(\mathbf{y}|\mathbf{x})$). 
\end{proposition}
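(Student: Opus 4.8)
The plan is to exploit the exact symmetry between this statement and the preceding Concept Shift-F proposition: Concept Shift-A is obtained from Concept Shift-F by interchanging the roles of the feature channel $\mathbf{x}$ and the neighborhood channel $\mathbf{a}$, since here the structure-generating parameters $(\mathcal{M}_s,\mathbf{b}_s)$ vary while $(\mathcal{M}_x,\mathbf{b}_x)$, $(\mathcal{M}_y,\mathbf{b}_y)$ and $P(\mathbf{z})$ are held fixed. I would therefore reuse the skeleton of that proof, swapping every occurrence of $\mathbf{x}$ with $\mathbf{a}$, and establish the conclusions in the order: (i) $P(\mathbf{y}\mid\mathbf{x})$ is invariant; (ii) $P(\mathbf{a})$, and hence $P(\mathbf{y}\mid\mathbf{a})$, shifts; (iii) the joint $P(\mathbf{x},\mathbf{a})$ shifts, so the dependencies $P(\mathbf{a}\mid\mathbf{x})$ and $P(\mathbf{x}\mid\mathbf{a})$ shift; (iv) $P(\mathbf{y}\mid\mathbf{x},\mathbf{a})$ shifts.

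For the invariant side I would first note that, under Assumption~\ref{assumption_1}, $\mathbf{x}_i=\mathcal{M}_x\mathbf{z}_i+\mathbf{b}_x$ depends on $\mathbf{z}_i$ alone, and that both $\mathcal{M}_x,\mathbf{b}_x$ and $\mathcal{M}_y,\mathbf{b}_y$ are unchanged across domains while $P(\mathbf{z})$ is fixed. Consequently the joint law $P(\mathbf{x},\mathbf{y})$ is the pushforward of the invariant $P(\mathbf{z})$ under a fixed affine map, so $P^\text{train}(\mathbf{x},\mathbf{y})=P^\text{test}(\mathbf{x},\mathbf{y})$ and therefore $P^\text{train}(\mathbf{y}\mid\mathbf{x})=P^\text{test}(\mathbf{y}\mid\mathbf{x})$. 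This is the clean half of the argument and is in fact simpler than its Concept Shift-F counterpart, because $\mathbf{x}_i$ carries no graph-level aggregation.

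The substantive work is the shifting side. Here I would invoke the spillover Observation of Appendix~\ref{append:spillover}: changing $\mathcal{M}_s$ alters the edge probabilities $p(\mathcal{A}_{ij}=1)=c(\|\mathcal{M}_s\mathbf{z}_i-\mathcal{M}_o\mathbf{z}_j\|_2^2+1)^{-1}$, which changes the conditional neighbor-selection rule and hence the conditional law $P(\mathbf{a}_i\mid\mathbf{z}_i)$ of the aggregate $\mathbf{a}_i=\mathrm{agg}(\{\mathcal{M}_a\mathbf{z}_j\})$. Under the Law of Large Numbers (Assumption~\ref{assumption_LLN}) this conditional expectation stabilizes to a limit that depends explicitly on $\mathcal{M}_s$, so a genuine change in $\mathcal{M}_s$ yields $P^\text{train}(\mathbf{a})\neq P^\text{test}(\mathbf{a})$. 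Because $\mathbf{y}_i$ remains a fixed affine image of $\mathbf{z}_i$ while the $(\mathbf{a}_i,\mathbf{z}_i)$ coupling moves, the joint $P(\mathbf{y},\mathbf{a})$ changes and $P^\text{train}(\mathbf{y}\mid\mathbf{a})\neq P^\text{test}(\mathbf{y}\mid\mathbf{a})$ follows. Since $P(\mathbf{x})$ is fixed but $P(\mathbf{a})$ moves, the joint $P(\mathbf{x},\mathbf{a})$ moves, giving the stated shift in $P(\mathbf{a}\mid\mathbf{x})$ and $P(\mathbf{x}\mid\mathbf{a})$; the same movement propagates to $P(\mathbf{y},\mathbf{x},\mathbf{a})$, yielding $P^\text{train}(\mathbf{y}\mid\mathbf{x},\mathbf{a})\neq P^\text{test}(\mathbf{y}\mid\mathbf{x},\mathbf{a})$ by the usual quotient argument used in the previous two propositions.

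The main obstacle is step (ii): rigorously certifying that $P(\mathbf{a}\mid\mathbf{z})$ truly depends on $\mathcal{M}_s$, rather than merely asserting it. Unlike the feature channel, $\mathbf{a}_i$ is a graph-level functional whose distribution is entangled with the latent variables of the entire neighborhood through a random edge set, so I would lean on the spillover/LLN analysis to reduce $\mathbf{a}_i$ to a $\mathcal{M}_s$-dependent limiting expectation and to rule out pathological cancellations, for instance the degenerate regime flagged after Assumption~\ref{assumption_1} in which $\mathcal{M}_s\approx 0$ makes the connections effectively random and washes out the dependence. As in the preceding propositions, I would treat the generic, non-degenerate regime as the intended setting and state the strict inequalities under that proviso.
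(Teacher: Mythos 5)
Your proposal follows essentially the same route as the paper's proof: invariance of $P(\mathbf{y}\mid\mathbf{x})$ via the fixed affine pushforward of the unchanged $P(\mathbf{z})$, a shift in $P(\mathbf{a})$ induced by the changed $(\mathcal{M}_s,\mathcal{M}_o)$, and the same quotient arguments propagating the shift through $P(\mathbf{x},\mathbf{a})$, $P(\mathbf{y},\mathbf{a})$, and $P(\mathbf{y},\mathbf{x},\mathbf{a})$. If anything, you are slightly more careful than the paper at the one step it merely asserts --- that changing $\mathcal{M}_s$ genuinely moves $P(\mathbf{a})$ --- by invoking the spillover/LLN observation and explicitly excluding the degenerate $\mathcal{M}_s\approx 0$ regime, which is a reasonable strengthening rather than a departure.
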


\begin{proof}
Under concept shift-A, we have:
\[
P^\text{train}(\mathbf{z}) = P^\text{test}(\mathbf{z})
\]
while the transformation matrices and bias vectors for generating \(\mathbf{a}\) change, but those for \(\mathbf{x}\) and \(\mathbf{y}\) remain the same between training and testing datasets:
\[
\mathcal{M}_{x}^\text{train} = \mathcal{M}_{x}^\text{test}, \quad \mathbf{b}_{x}^\text{train} = \mathbf{b}_{x}^\text{test}
\]
\[
\mathcal{M}_{y}^\text{train} = \mathcal{M}_{y}^\text{test}, \quad \mathbf{b}_{y}^\text{train} = \mathbf{b}_{y}^\text{test}
\]
\[
\mathcal{M}_{s}^\text{train} \neq \mathcal{M}_{s}^\text{test}, \quad \mathcal{M}_{o}^\text{train} \neq \mathcal{M}_{o}^\text{test}
\]

The generation processes are:
\[
\mathbf{x}_i = \mathcal{M}_{x} \mathbf{z}_i + \mathbf{b}_{x}
\]
\[
\mathbf{y}_i = \mathcal{M}_{y} \mathbf{z}_i + \mathbf{b}_{y}
\]
\[
p^\text{train}(A_{ij} = 1) = c \left( \| \mathcal{M}_{s}^\text{train} \mathbf{z}_i - \mathcal{M}_{o}^\text{train} \mathbf{z}_j \|^2_2 + 1 \right)^{-1}
\]
\[
p^\text{test}(A_{ij} = 1) = c \left( \| \mathcal{M}_{s}^\text{test} \mathbf{z}_i - \mathcal{M}_{o}^\text{test} \mathbf{z}_j \|^2_2 + 1 \right)^{-1}
\]
where \(\mathbf{a}_i\) is defined as:
\[
\mathbf{a}_i = \text{agg}(\{ \mathcal{M}_a \mathbf{z}_j \}) \quad \text{where} \quad j \in \mathcal{N}_i^1 \cup \mathcal{N}_i^2 \cup \ldots \cup \mathcal{N}_i^l
\]

Given that \(P(\mathbf{z})\) remains the same, the marginal distribution of \(\mathbf{x}\) does not change:
\[
P^\text{train}(\mathbf{x}) = P^\text{test}(\mathbf{x})
\]

However, the change in the transformation matrices \(\mathcal{M}_{s}\) and \(\mathcal{M}_{o}\) implies that the marginal distribution of \(\mathbf{a}\) changes:
\[
P^\text{train}(\mathbf{a}) \neq P^\text{test}(\mathbf{a})
\]

Since \(\mathbf{a}\) is an aggregation of transformed neighbors' latent vectors \(\mathbf{z}_j\), and the generation process for \(\mathbf{a}\) changes, the joint distribution \(P(\mathbf{x}, \mathbf{a})\) changes due to the change in \(P(\mathbf{a})\):
\[
P^\text{train}(\mathbf{x}, \mathbf{a}) \neq P^\text{test}(\mathbf{x}, \mathbf{a})
\]

This implies that:
\[
P^\text{train}(\mathbf{a}|\mathbf{x}) = \frac{P^\text{train}(\mathbf{x}, \mathbf{a})}{P^\text{train}(\mathbf{x})} \neq \frac{P^\text{test}(\mathbf{x}, \mathbf{a})}{P^\text{test}(\mathbf{x})} = P^\text{test}(\mathbf{a}|\mathbf{x})
\]
\[
P^\text{train}(\mathbf{x}|\mathbf{a}) = \frac{P^\text{train}(\mathbf{x}, \mathbf{a})}{P^\text{train}(\mathbf{a})} \neq \frac{P^\text{test}(\mathbf{x}, \mathbf{a})}{P^\text{test}(\mathbf{a})} = P^\text{test}(\mathbf{x}|\mathbf{a})
\]

The generation process for \(\mathbf{y}\) is:
\[
\mathbf{y}_i = \mathcal{M}_{y} \mathbf{z}_i + \mathbf{b}_{y}
\]

Since \(\mathcal{M}_{y}\) and \(\mathbf{b}_{y}\) are the same across training and testing, and \(\mathbf{z}_i\) influences \(\mathbf{a}_i\) through \(\mathcal{M}_{s}\) and \(\mathcal{M}_{o}\), the change in \(\mathcal{M}_{s}\) and \(\mathcal{M}_{o}\) affects the joint distribution \(P(\mathbf{y}, \mathbf{a})\):
\[
P^\text{train}(\mathbf{y}, \mathbf{a}) \neq P^\text{test}(\mathbf{y}, \mathbf{a})
\]

Thus, the conditional distribution:
\[
P^\text{train}(\mathbf{y}|\mathbf{a}) = \frac{P^\text{train}(\mathbf{y}, \mathbf{a})}{P^\text{train}(\mathbf{a})} \neq \frac{P^\text{test}(\mathbf{y}, \mathbf{a})}{P^\text{test}(\mathbf{a})} = P^\text{test}(\mathbf{y}|\mathbf{a})
\]

Since \(\mathbf{y}\) and \(\mathbf{x}\) are both derived from \(\mathbf{z}\) through invariant transformation matrices and bias vectors, and \(P(\mathbf{z})\) remains the same, the joint distribution \(P(\mathbf{y}, \mathbf{x})\) remains unchanged:
\[
P^\text{train}(\mathbf{y}, \mathbf{x}) = P^\text{test}(\mathbf{y}, \mathbf{x})
\]

Thus, the conditional distribution:
\[
P^\text{train}(\mathbf{y}|\mathbf{x}) = \frac{P^\text{train}(\mathbf{y}, \mathbf{x})}{P^\text{train}(\mathbf{x})} = \frac{P^\text{test}(\mathbf{y}, \mathbf{x})}{P^\text{test}(\mathbf{x})} = P^\text{test}(\mathbf{y}|\mathbf{x})
\]

The change in the joint distribution \(P(\mathbf{x}, \mathbf{a})\) implies a change in the joint distribution \(P(\mathbf{y}, \mathbf{x}, \mathbf{a})\) since \(\mathbf{y}\) depends on both \(\mathbf{x}\) and \(\mathbf{a}\):
\[
P^\text{train}(\mathbf{y}, \mathbf{x}, \mathbf{a}) \neq P^\text{test}(\mathbf{y}, \mathbf{x}, \mathbf{a})
\]

Thus, the conditional distribution:
\[
P^\text{train}(\mathbf{y}|\mathbf{x}, \mathbf{a}) = \frac{P^\text{train}(\mathbf{y}, \mathbf{x}, \mathbf{a})}{P^\text{train}(\mathbf{x}, \mathbf{a})} \neq \frac{P^\text{test}(\mathbf{y}, \mathbf{x}, \mathbf{a})}{P^\text{test}(\mathbf{x}, \mathbf{a})} = P^\text{test}(\mathbf{y}|\mathbf{x}, \mathbf{a})
\]

\end{proof}

\section{Important assumptions for CATE estimation}
\label{appendix-CATE-assumptions}
\begin{assumption}
(SUTVA). The potential outcomes of any unit do not vary with the treatment assigned to other units, and, for each unit, there are no different forms or versions of each treatment level, which leads to different potential outcomes. 
\end{assumption}

We discuss the reasonableness of this assumption in section \ref{sec:graph-generation-process}.

\begin{assumption}
(Consistency). The potential outcome of treatment $t$ equals the observed outcome if the actual treatment received is $t$. 
\end{assumption}

\begin{assumption}
(Ignorability). Given pretreatment covariate $\mathbf{X}$, the outcome variable $Y_0$ and $Y_1$ is independent of treatment assignment, i.e. $(Y_{0}, Y_{1}\indep T|X)$.  
\end{assumption}
This assumption is also called ``\textit{no unmeasured confounder}''. This assumption is automatically satisfied with the \textit{``close-world assumption''} made in learning a machine learning model, which implicitly assumes that the input data encompasses the necessary information for making accurate predictions, as we explain in section \ref{sec:graph-generation-process}. In our case, it implies that no other confounders besides $\mathbf{a}$ and $\mathbf{x}$ that affect the output should exist.  

\begin{assumption}
(Positivity). For any set of covariates $\mathbf{x}$, the probability to receive any treatment $t$ is positive, i.e., $0<P(T=t|X=x)<1, \forall t, x$.
\end{assumption}

\section{Derivation of the prediction function of SGC}\label{append:sgc}
A typical SGC makes predictions with the classifier:

\begin{equation}
\label{eqn:sgc}
    \hat{y} = \sigma ({S}^{k}\mathbf{X}\Theta),
\end{equation}
where $S$ is the
“normalized” adjacency matrix $S = \tilde{D}^{-\frac{1}{2}}\tilde{A}\tilde{D}^{\frac{1}{2}}$, $\tilde{A} = A+I$, $\tilde{D}$ is the degree matrix of $\tilde{A}$, $k$ is the number of layers, $\Theta$ is the parameterized weights of each layer into a single matrix: $\Theta = \Theta_{0}\Theta_{1}...\Theta_{k}$. 
Note that the above equation \ref{eqn:sgc} averages the representation of all nodes at each hop, so the effect of the central node is diminished when its neighbor size is large. Alternatively, we can assign a fixed weight $0<\gamma<1$ to the central node, and the rest  $1-\mathbf{\gamma}$ is shared by the neighboring nodes during the aggregation process, so the $S^k$ in equation  \ref{eqn:sgc} is replaced by $S'$, where:
\begin{equation}
S' = \tilde{D}_{1}^{-\frac{1}
{2}}A\tilde{A}^k\tilde{D}_{1}^{\frac{1}{2}} + \tilde{D}_{2}^{-\frac{1}
{2}}I\tilde{D}_{2}^{\frac{1}{2}},
\end{equation}
where $\tilde{D}_1$ and $\tilde{D}_2$ are diagonal matrix such that $\tilde{D}_1(i,i) = (1-\gamma) D^k(i,i)$ and $\tilde{D}_2(i,i) = \gamma D^k(i,i)$. The new prediction function is then expressed as:
\begin{align}
    \hat{\mathbf{y}_i} =&\sigma(\mathbf{h}^y_i) =\sigma ([\mathbf{S'}^{k}\mathbf{X}\Theta]_i)\\
\label{eq::predict_xunc_1}
=&\sigma\Bigl(\underbrace{[\tilde{D}_{1}^{-\frac{1}
{2}}A\tilde{A}^k\tilde{D}_{1}^{\frac{1}{2}}\mathbf{X}\Theta]_i}_{\Psi'_{\mathbf{a}}(\mathbf{a}_i)} + \underbrace{[(\tilde{D}_{2}^{-\frac{1}
{2}}I\tilde{D}_{2}^{\frac{1}{2}})^{k}\mathbf{X}\Theta]_i}_{\Psi'_{\mathbf{x}}(\mathbf{x_i})}\Bigr). 
\end{align}
Inside $\sigma(\cdot)$, the first term $\tilde{D}_{1}^{-\frac{1}
{2}}A\tilde{A}^k\tilde{D}_{1}^{\frac{1}{2}}X\Theta$ models the contribution of the neighborhood nodes' representation (excluding central node) on $\mathbf{h}^{y}_i$,  we call it $\Psi'_{\mathbf{a}}(\mathbf{a}_i)$; similarly, the second term $(\tilde{D}_{2}^{-\frac{1}
{2}}I\tilde{D}_{2}^{\frac{1}{2}})^{k}X\Theta$ models the contribution of the features of the central node and we call it $\Psi'_{\mathbf{x}}(\mathbf{x}_i)$. 
Note that the magnitudes of the diagnal matrixes of $\Psi'_{\mathbf{a}}(\mathbf{a}_i)$ and $\Psi'_{\mathbf{x}}(\mathbf{x}_i)$ are scaled by the parameter $\gamma$. We can further rewrite Equation \ref{eq::predict_xunc_1} in the unscaled form:

\begin{equation}
\sigma\Bigl(\gamma\underbrace{[({{\tilde{D}}^{k}})^{-\frac{1}
{2}}A\tilde{A}^k({{\tilde{D}}^{k}})^{\frac{1}{2}}\mathbf{X}\Theta]_i}_{\Psi_{\mathbf{a}}(\mathbf{a}_i)} + (1-\gamma)\underbrace{[(({{\tilde{D}}^{k}})^{-\frac{1}
{2}}I({{\tilde{D}}^{k}})^{\frac{1}{2}})^{k}\mathbf{X}\Theta]_i}_{\Psi_{\mathbf{x}}(\mathbf{x_i})}\Bigr), 
\end{equation}



where $[({{\tilde{D}}^{k}})^{-\frac{1}
{2}}A\tilde{A}^k({{\tilde{D}}^{k}})^{\frac{1}{2}}\mathbf{X}\Theta]_i$ can be viewed as $\Psi_{\mathbf{a}}(\mathbf{a}_i)$, and $[(({{\tilde{D}}^{k}})^{-\frac{1}
{2}}I({{\tilde{D}}^{k}})^{\frac{1}{2}})^{k}\mathbf{X}\Theta]_i$ can be viewed as $\Psi_{\mathbf{x}}(\mathbf{x_i})$, which aligns with asssumtion \ref{assumption-2}. 

\section{Dual Casual Decomposition}
\label{appendix:dual}
We aim to learn two sets of $g(\cdot), h(\cdot)$ for the two casual models SCM-F and SCM-A. We denote them as $g^{X}(\cdot), h^{X}(\cdot)$ and $g^{A}(\cdot), h^{A}(\cdot)$, separately. Each casual model is also associated with a set of $e(\cdot), m(\cdot)$, denoted as $e^{X}(\cdot), m^{X}(\cdot)$ and $e^{A}(\cdot), m^{A}(\cdot)$.
Unlike SIN, which learns all model parameters within the two-stage training procedure, we allow the model to learn  $g^{X}(\cdot)$, $h^{A}(\cdot)$, and $m^{X}(\cdot)$ with shared parameters beforehand. 
Since $g^{X}(\mathbf{a})$, $h^{A}(\mathbf{a})$, and $m^{X}(\mathbf{a})$ are all functions of the neighborhood representation $\mathbf{a}$, whose value is determined with a $L$-layer GNN model $p^{GNN}(\cdot)$ that generates a neighborhood representation by aggregating the embedding of nodes in $L$-hop neighborhood without including the central node. 

We then map the GNN embedding to the space of $\mathbf{h}^y$ with an MLP layer $p'(\cdot)$ that follows $p(\cdot)$. 

The parameters of $p(\cdot)$ and $p'(\cdot)$, $\rho, \rho'$, are learned with the goal of minimizing:
        $J_{p, p'}(\rho, \rho') = \frac{1}{n} \sum_{i=1}^{n}\mathcal{L}_\text{mse}\left(\mathbf{h}^y_i, \Bigr[ \hat{p'}_{\rho'}\bigl(\hat{p}_{\rho} (\mathbf{X}, A)\bigr)
        \Bigr]_i \right)$.
As the ground truth $\mathbf{h}^y$ is not available, while $\mathbf{y}=\sigma(\mathbf{h}^y)$ is available, we thus apply $\sigma(\cdot)$ on both sides and minimizing the following cross entropy function instead: 
\begin{equation}
        J_{p, p'}(\rho, \rho') = \frac{1}{n} \sum_{i=1}^{n}\mathcal{L}_{ce}\left(\mathbf{y}_i, \sigma \Bigr( \Bigr[ \hat{p'}_{\rho'}\bigl(\hat{p}_{\rho} (\mathbf{X}, A)\bigr) \Bigr]_i \Bigr) \right).
\label{J_pp}
\end{equation}
We apply this alternation for the rest of the training process. 
With the optimized $p(\cdot)$, we first assign the learned neighborhood representation to $\mathbf{a}$, such that  $\mathbf{a}_i \triangleq [\hat{p}_{\rho} (\mathbf{X}, A)]_i$. 
Without losing generalizability, we assign $g^{X}(\mathbf{a})$, $h^{A}(\mathbf{a})$ as the same as $\mathbf{a}$. We then estimate $m^{X}(\mathbf{a})$, with the optimized $p'(\cdot)$.  $g^{X}(\mathbf{a})$, $h^{A}(\mathbf{a})$, $m^{X}(\mathbf{a})$ and  remain fixed values in the rest of the learning process: $g^{X}(\mathbf{a}_i) \triangleq h^{A}(\mathbf{a}_i) \triangleq
    \mathbf{h}_i^{\mathbf{a}} \triangleq \mathbf{a}_{i},     m^{X}(\mathbf{a}_i) \triangleq \mathbf{m}_i^{\mathbf{a}} \triangleq  \hat{p'}_{\rho'}\bigl(\mathbf{a}_{i}\bigr)$

We learn the remaining parameters for each casual model. For SCM-A, we follow the two-stage procedure:

\textbf{Stage 1:} Learn parameter $\theta^A$ of $\hat{m}_{\theta}(\mathbf{x})$ {to minimize the cross-entropy loss as following:} 
$
        J_{m}(\theta^A) = \frac{1}{n} \sum_{i=1}^{n} \mathcal{L}_{ce}\left(\mathbf{y}_{i}, \sigma \bigl( \hat{m}_{\theta^A}(\mathbf{x}_{i}) \bigr) \right)^2$

\textbf{Stage 2:} Learn parameter $\psi^A$ for $g^{A}(\cdot)$ and $\eta^A$ for $e^{A}(\cdot)$ with the objectives:

\begin{equation}
        J_{g}(\psi^A) = \frac{1}{n} \sum_{i=1}^{n} \mathcal{L}_{ce} \Biggl( \mathbf{y}_{i}, \sigma \biggl( \hat{m}_{\theta^A}(\mathbf{x}_{i}) +  \hat{g}_{\psi^A}(\mathbf{x}_{i})^{\top} \bigl( \mathbf{h}_i^{\mathbf{a}} - \hat{e}_{\eta^A}^A (\mathbf{x}_i)\bigr) \biggr)\Biggr)^2, 
        J_{e}(\eta^A) = \frac{1}{n} \sum_{i=1}^{n} \left\|\mathbf{h}_i^{\mathbf{a}} - \hat{e}_{\eta^A}^A (\mathbf{x}_i) \right\| ^2_2
\end{equation}
For SCM-F, stage 1 is no longer necessary as $\hat{m}_{\theta}(\mathbf{a}_i)$ is fixed as $\mathbf{m}^{\mathbf{a}}_i$. We only need to learn parameter $\phi^X$ for $h^{X}(\cdot)$ and $\eta_F$ for $e^{X}(\cdot)$ with the objectives:
\begin{equation}
        J_{h}(\phi^X) = \frac{1}{n} \sum_{i=1}^{n} \mathcal{L}_{ce} \Biggl( \mathbf{y}_{i}, \sigma \biggl( \mathbf{m}^{\mathbf{a}}_i +  {\mathbf{h}_i^{\mathbf{a}}}^{\top} \bigl( \hat{h}^X_{\phi^X}(\mathbf{x}_i) - \hat{e}_{\eta^X}^A (\mathbf{a}_i)\bigr) \biggr)\Biggr)^2, 
        J_{e}(\eta^X) = \frac{1}{n} \sum_{i=1}^{n} \left\|\mathbf{h}^A - \hat{e}_{\eta^X}^A ({\mathbf{h}_i^{\mathbf{a}}}) \right\| ^2_2
\end{equation}

We follow the alternating optimization process in \cite{kaddour2021causal} which updates $\psi, \phi$ more frequently than $\eta$ to achieve a more stabilized training process. 

\section{Complexity analysis}
\label{sec::compare}

Consider a graph with \( n \) nodes and  \( e \) edges, and an average degree \( \bar{d} \). A Graph Neural Network (GNN) with \( L \) layers computes embeddings with a time and space complexity of \( O\left(n L \bar{d}^{2}\right) \). When obtaining the GNN embedding, \model{} performs one encoder computation per update step. During training, five distinct encoders are learned for causal models, each with a time complexity of \( O(n) \) per update step. Therefore, the overall time complexity is \( O\left(n L \bar{d}^{2}\right) + O\left(5n\right) \).

We compare the complexity and requirements of \model{} with other methods in Table \ref{tab::compare}. For EERM and FLOOD, $K$ is the number of augmented training environments. For CIT, $K$ is the number of clusters and $p$ is the probability of transfer. As it shows,  \model{} has competitive complexity, while having the least restrictive requirements, making it applicable to a wide range of scenarios.

\begin{table}[ht]
    \caption{Comparision between methods.\newline}
    \centering
    \tiny
    \begin{tabular}{ c| p{10mm} p{15mm} p{15mm} p{15mm} p{15mm} c}
    \toprule
     Method&Tailored \newline for graphs & Multiple \newline training envs & Training envs \newline augmentation & Access to test  \newline distributions & Test-time 
     \newline training & Complexity \\
    \midrule
       ERM&N/A&N/A&N/A&N/A&N/A&\( O\left(n L \bar{d}^{2}\right) \)\\
       IRM&\ding{55}&\textcolor{red}{Required}&{Not Required}&{Not Required}&{Not Required}&\( O\left(n L \bar{d}^{2}\right) \)\\
       REX&\ding{55}&\textcolor{red}{Required}&{Not Required}&{Not Required}&{Not Required}&\( O\left(n L \bar{d}^{2}\right) \)\\
       EERM&\ding{51}&{Not Required}&\textcolor{red}{Required}&{Not Required}&{Not Required}& $O\left(K\left(n L \bar{d}^{2}\right)\right)$\\
       CIT&\ding{51}&{Not Required}&{Not Required}&{Not Required}&{Not Required}& $O\left(nL\bar{d}^{2}\right)+O(K(e+nK)+pn)$\\
       FLOOD&\ding{51}&\textcolor{red}{Required}&{Not Required}&{Not Required}&\textcolor{red}{ Required}& $O\left((K+2)\left(n L \bar{d}^{2}\right)\right)$\\
       SR-GNN&\ding{51}&{Not Required}&{Not Required}&\textcolor{red}{Required}&{ Not Required}&\( O\left(nL\bar{d}^{2}+n^{2}\right) \)\\
       \model{}&\ding{51}&{Not Required}&{Not Required}&{Not Required}&{Not Required}& \( O\left(n L \bar{d}^{2}\right) + O\left(5n\right) \)\\
       \bottomrule
    \end{tabular}
    \label{tab::compare}
\end{table}

\section{Datasets and Setup}\label{append:datasetup}

\subsection{Hyperparameter Setup}\label{append:hyper}

The hidden size of the backbone GNNs of all methods is searched from {8, 16, 32, 64, 128}, the number of heads for GAT is searched from {4, 8}, and the number of layers is 2. We use Adam as the optimizer with a learning rate of 1e-3 and weight decay of 1e-5.
For methods with penalty weights, we searched from different values centered on their default value. For instance, for IRM, the default penalty weight is 1e5, we then conduct our search on {1e2, 1e3, 1e4, 1e5, 1e6, 1e7, 1e8}. We follow the default setting for other hyperparameters, such as the number of augmented views. All experiments are conducted on an NVIDIA GeForce RTX 3090 GPU with 24GB memory.

\subsection{Soft label leave-out setting}
In our experiments, if we have 6 classes and the training set has $80\%$ from the first two classes, $10\%$ from the second two classes, and $10\%$ from the last two classes. Then, the validation set owns $10\%$ from the first two classes, $80\%$ from the second two classes, and $10\%$ from the last two classes.  Test set owns $10\%$ from the first two classes, $10\%$ from the second two classes, and $80\%$ from the last two classes.

\begin{table}[ht]
    \caption{Statistics of real-world single graph datasets.\newline}
    \centering
    \scriptsize
    \begin{tabular}{c c c c c c c c }
    \toprule
     & \texttt{Cora} & \texttt{Citeseer} & \texttt{Amazon-photo} &\texttt{Coauthor-CS} & \texttt{Squirrel} & \texttt{Roman-empire} & \texttt{Tolokers}\\
    \midrule
       \# Node &2,708&3,327&7,650&18,333&2,223&22,662&11,758\\
       \# Edge &5,278&4,552&119,081&81,894&23,499&32,927&259,500\\
       \# Class &7&6&8&15&5&18&2\\
       \# Feat&1433&3703&745&6,805&2,089&300&10\\
       Metric&Marco-F1&Marco-F1&Marco-F1&Marco-F1&Marco-F1&Marco-F1&F1 score\\
       \bottomrule
    \end{tabular}
    \label{tab:stat:single-graph}
\end{table}
  
\begin{table}[ht]
    \caption{Statistics of the \texttt{Facebook-100} dataset.\newline}
    \centering
    \scriptsize
    \begin{tabular}{l  c c c c c c c }
    \toprule
     & \texttt{Johns Hopkins} & \texttt{Caltech} & \texttt{Amherst} &\texttt{Bingham} & \texttt{Duke} & \texttt{Princeton}&\texttt{WashU} \\

    \midrule
       \# Node & 5,180 & 769&2,235&10,004&9,895&6,596& 7,755\\
       \# Edge & 373,172&33,312&181,908&725,788&1,012,884&586,640&735,082\\
       Positive rate & 43\% & 53\%&36\%&40\%&39\%&37\%&38\%\\
    \midrule
    & \texttt{Brandeis} & \texttt{Carnegie} & \texttt{Penn} & \texttt{Brown} & \texttt{Texas} &\texttt{Cornell5}, &\texttt{Yale}\\
    \midrule
        \# Node &3,898&6,637&41,554&8,600&31,560&18,660&8,578\\
       \# Edge &275,134&499,934&2,724,458&769,052&2,439,300&1,581,554&810,900\\
       Positive rate &30\%&47\%&43\%&32\%&37\%&37\%&35\%\\
       \bottomrule
    \end{tabular}
    \label{tab:stat:fb100}
\end{table}
\begin{table}[ht]
    \centering
        \caption{Statistics of the \texttt{OGB-elliptic} dataset.\newline}
    \scriptsize
    \begin{tabular}{c c c c c c c c}
    \toprule
    Time slot & 1-6 & 7-12 & 13-18 & 19-24 & 25-30 &31-36& 37-43\\
    \midrule
       \# Node & 28,571 & 18,525 & 25,985 & 14,337 & 24,878 & 25,920 & 29,684\\
       \# Edge &33,835 &19,613 & 29,274 & 15,296 & 28,223 & 29,689 & 33,659 \\
       Positive rate &11\%&22\%&12\%&23\%&12\%&10\%&3\%\\
       \bottomrule
    \end{tabular}
   \label{tab:stat:OG}
\end{table}
\begin{table}[ht]
    \caption{Statistics of synthetic datasets.\newline}
    \centering
    \scriptsize
    \begin{tabular}{c c c c }
    \toprule
     & \texttt{h-feat} & \texttt{qtr-feat} & \texttt{full-feat}\\
    \midrule
       \# Node & 8,000& 8,000&8,000 \\
       \# Edge & 404,597 & 1,487,637 &35,850\\
       \# Class &4&4&4 \\
       \# Feat&8&4&16\\
       \bottomrule
    \end{tabular}
   \label{tab:stat:syn}
\end{table}
\begin{table}[ht]
\caption {Test F1 scores on \texttt{Facebook-100} dataset. The results from the GNN with the highest validation F1 score are reported. The best results are bold-faced.\newline}
\tiny
\centering
\begin{tabular}
{ l cc cc cc cc cc cc cc cc cc}
 \toprule
 Training&\multicolumn{6}{c}{\texttt{Johns Hopkins} + \texttt{Caltech} + \texttt{Amherst}} &  \multicolumn{6}{c}{\texttt{Bingham} + \texttt{Duke} + \texttt{Princeton}} & 
  \multicolumn{6}{c}{\texttt{WashU} + \texttt{Brandeis} + \texttt{Carnegie}} \\
 
  Test& \multicolumn{2}{c}{\texttt{Penn}} &  \multicolumn{2}{c}{\texttt{Brown}} & \multicolumn{2}{c}{\texttt{Texas}} & \multicolumn{2}{c}{\texttt{Penn}} &  \multicolumn{2}{c}{\texttt{Brown}} & \multicolumn{2}{c}{\texttt{Texas}} & \multicolumn{2}{c}{\texttt{Penn}} &  \multicolumn{2}{c}{\texttt{Brown}} & \multicolumn{2}{c}{\texttt{Texas}} \\ 
  \midrule
  ERM&\multicolumn{2}{c}{49.23$\pm$1.72}&\multicolumn{2}{c}{49.68$\pm$0.93}&\multicolumn{2}{c}{48.57$\pm$0.21}&\multicolumn{2}{c}{51.42$\pm$4.25}&\multicolumn{2}{c}{51.45$\pm$1.48}&\multicolumn{2}{c}{47.37$\pm$4.78}&\multicolumn{2}{c}{47.34$\pm$5.48}&\multicolumn{2}{c}{48.08$\pm$2.51}&\multicolumn{2}{c}{48.36$\pm$4.30}\\
  IRM&\multicolumn{2}{c}{35.26$\pm$2.40}&\multicolumn{2}{c}{46.92$\pm$5.66}&\multicolumn{2}{c}{36.86$\pm$1.64}&\multicolumn{2}{c}{42.12$\pm$1.99}&\multicolumn{2}{c}{51.34$\pm$0.90}&\multicolumn{2}{c}{41.57$\pm$4.31}&\multicolumn{2}{c}{50.16$\pm$1.30}&\multicolumn{2}{c}{49.62$\pm$3.32}&\multicolumn{2}{c}{46.41$\pm$5.28}\\
  REX&\multicolumn{2}{c}{44.77$\pm$6.48}&\multicolumn{2}{c}{42.65$\pm$7.34}&\multicolumn{2}{c}{44.05$\pm$8.88}&\multicolumn{2}{c}{43.77$\pm$5.72}&\multicolumn{2}{c}{47.26$\pm$5.75}&\multicolumn{2}{c}{44.36$\pm$7.82}&\multicolumn{2}{c}{39.67$\pm$8.59}&\multicolumn{2}{c}{44.65$\pm$6.67}&\multicolumn{2}{c}{40.28$\pm$7.59}\\
  EERM&\multicolumn{2}{c}{22.62$\pm$22.91}&\multicolumn{2}{c}{49.44$\pm$1.92}&\multicolumn{2}{c}{49.12$\pm$1.71}&\multicolumn{2}{c}{18.91$\pm$18.99}&\multicolumn{2}{c}{45.95$\pm$3.74}&\multicolumn{2}{c}{47.83$\pm$1.17}&\multicolumn{2}{c}{24.40$\pm$24.62}&\multicolumn{2}{c}{47.58$\pm$2.91}&\multicolumn{2}{c}{51.27$\pm$1.04}\\
  CIT&\multicolumn{2}{c}{44.66$\pm$6.65}&\multicolumn{2}{c}{45.26$\pm$6.21}&\multicolumn{2}{c}{42.10$\pm$8.97}&\multicolumn{2}{c}{45.82$\pm$6.46}&\multicolumn{2}{c}{48.62$\pm$1.88}&\multicolumn{2}{c}{39.79$\pm$4.74}&\multicolumn{2}{c}{37.99$\pm$6.54}&\multicolumn{2}{c}{39.66$\pm$6.58}&\multicolumn{2}{c}{39.88$\pm$6.27}\\
  FLOOD&\multicolumn{2}{c}{42.37$\pm$5.06}&\multicolumn{2}{c}{41.48$\pm$5.28}&\multicolumn{2}{c}{40.82$\pm$5.94}&\multicolumn{2}{c}{46.99$\pm$7.48}&\multicolumn{2}{c}{47.28$\pm$5.61}&\multicolumn{2}{c}{44.48$\pm$5.22}&\multicolumn{2}{c}{41.21$\pm$7.68}&\multicolumn{2}{c}{46.24$\pm$8.52}&\multicolumn{2}{c}{40.16$\pm$6.01}\\

  StableGL&\multicolumn{2}{c}{44.54$\pm$6.58}&\multicolumn{2}{c}{45.64$\pm$6.25}&\multicolumn{2}{c}{43.75$\pm$5.65}&\multicolumn{2}{c}{46.94$\pm$8.54}&\multicolumn{2}{c}{47.77$\pm$4.79}&\multicolumn{2}{c}{47.51$\pm$5.94}&\multicolumn{2}{c}{38.31$\pm$8.42}&\multicolumn{2}{c}{43.22$\pm$4.50}&\multicolumn{2}{c}{37.86$\pm$4.75}\\
\model&\multicolumn{2}{c}{\textbf{55.31$\pm$0.40}}&\multicolumn{2}{c}{\textbf{53.31$\pm$0.11}}&\multicolumn{2}{c}{\textbf{53.56$\pm$0.19}}&\multicolumn{2}{c}{\textbf{54.59$\pm$0.35}}&\multicolumn{2}{c}{\textbf{53.48$\pm$0.15}}&\multicolumn{2}{c}{\textbf{53.12$\pm$0.19}}&\multicolumn{2}{c}{\textbf{54.44$\pm$1.18}}&\multicolumn{2}{c}{\textbf{53.02$\pm$0.36}}&\multicolumn{2}{c}{\textbf{53.05$\pm$0.86}}\\
 
  \bottomrule
\end{tabular}
\label{tab::results-facebook-complete}
\end{table} 
\begin{table}
\caption {Test Macro-F1 scores on synthetic datasets. The best results are bold-faced.\newline}
\tiny
\centering
\begin{tabular}
{ l l cc cc cc}
 \toprule
  && \multicolumn{2}{c}{\texttt{h-feat}} &  \multicolumn{2}{c}{\texttt{qrt-feat}} & \multicolumn{2}{c}{\texttt{full-feat}}\\ 
  \midrule
\multirow{7}{*}{\rotatebox[origin=c]{90}{SGC}}& ERM & \multicolumn{2}{c}{47.41$\pm$0.25} & \multicolumn{2}{c}{48.62$\pm$1.88} & \multicolumn{2}{c}{48.57$\pm$0.21} \\
& IRM & \multicolumn{2}{c}{33.78$\pm$1.41} & \multicolumn{2}{c}{33.36$\pm$0.70} & \multicolumn{2}{c}{33.04$\pm$1.40} \\
& EERM & \multicolumn{2}{c}{38.14$\pm$8.25} & \multicolumn{2}{c}{37.86$\pm$5.15} & \multicolumn{2}{c}{39.74$\pm$7.09} \\
& CIT & \multicolumn{2}{c}{21.11$\pm$21.33} & \multicolumn{2}{c}{36.41$\pm$0.45} & \multicolumn{2}{c}{49.12$\pm$1.71} \\
& REX & \multicolumn{2}{c}{32.87$\pm$1.12} & \multicolumn{2}{c}{33.85$\pm$0.50} & \multicolumn{2}{c}{34.47$\pm$0.82} \\
& FLOOD & \multicolumn{2}{c}{39.53$\pm$8.83} & \multicolumn{2}{c}{37.26$\pm$4.82} & \multicolumn{2}{c}{39.13$\pm$6.39} \\
& StableGL & \multicolumn{2}{c}{41.79$\pm$8.86} & \multicolumn{2}{c}{38.30$\pm$5.60} & \multicolumn{2}{c}{41.22$\pm$6.30} \\
& \textbf{DeCaf} & \multicolumn{2}{c}{\textbf{55.31$\pm$0.40}} & \multicolumn{2}{c}{\textbf{53.16$\pm$0.17}} & \multicolumn{2}{c}{\textbf{53.56$\pm$0.19}} \\
  \midrule
  \multirow{7}{*}{\rotatebox[origin=c]{90}{GCN}}& ERM & \multicolumn{2}{c}{43.12$\pm$1.84} & \multicolumn{2}{c}{49.68$\pm$0.93} & \multicolumn{2}{c}{43.79$\pm$1.36} \\
& IRM & \multicolumn{2}{c}{34.03$\pm$0.94} & \multicolumn{2}{c}{32.90$\pm$0.60} & \multicolumn{2}{c}{32.47$\pm$1.16} \\
& EERM & \multicolumn{2}{c}{43.24$\pm$6.52} & \multicolumn{2}{c}{42.65$\pm$7.34} & \multicolumn{2}{c}{38.60$\pm$5.26} \\
& CIT & \multicolumn{2}{c}{22.79$\pm$23.15} & \multicolumn{2}{c}{49.44$\pm$1.92} & \multicolumn{2}{c}{40.81$\pm$5.74} \\
& REX & \multicolumn{2}{c}{33.10$\pm$1.60} & \multicolumn{2}{c}{35.62$\pm$2.60} & \multicolumn{2}{c}{34.49$\pm$0.89} \\
& FLOOD & \multicolumn{2}{c}{38.58$\pm$6.60} & \multicolumn{2}{c}{40.07$\pm$6.76} & \multicolumn{2}{c}{38.58$\pm$6.57} \\
& StableGL & \multicolumn{2}{c}{42.12$\pm$6.44} & \multicolumn{2}{c}{44.77$\pm$4.78} & \multicolumn{2}{c}{43.75$\pm$5.65} \\
& \textbf{DeCaf} & \multicolumn{2}{c}{\textbf{53.57$\pm$0.98}} & \multicolumn{2}{c}{\textbf{53.31$\pm$0.11}} & \multicolumn{2}{c}{\textbf{52.16$\pm$0.45}} \\

  \midrule
  \multirow{7}{*}{\rotatebox[origin=c]{90}{GAT}}& ERM & \multicolumn{2}{c}{49.23$\pm$1.72} & \multicolumn{2}{c}{47.34$\pm$1.25} & \multicolumn{2}{c}{46.13$\pm$1.54} \\
& IRM & \multicolumn{2}{c}{35.26$\pm$2.40} & \multicolumn{2}{c}{46.92$\pm$5.66} & \multicolumn{2}{c}{36.86$\pm$1.64} \\
& EERM & \multicolumn{2}{c}{44.77$\pm$6.48} & \multicolumn{2}{c}{41.94$\pm$3.92} & \multicolumn{2}{c}{44.05$\pm$8.88} \\
& CIT & \multicolumn{2}{c}{22.62$\pm$22.91} & \multicolumn{2}{c}{39.04$\pm$4.31} & \multicolumn{2}{c}{34.66$\pm$2.32} \\
& REX & \multicolumn{2}{c}{44.66$\pm$6.65} & \multicolumn{2}{c}{45.26$\pm$6.21} & \multicolumn{2}{c}{42.10$\pm$8.97} \\
& FLOOD & \multicolumn{2}{c}{42.37$\pm$5.06} & \multicolumn{2}{c}{41.48$\pm$5.28} & \multicolumn{2}{c}{40.82$\pm$5.94} \\
& StableGL & \multicolumn{2}{c}{44.54$\pm$6.58} & \multicolumn{2}{c}{45.64$\pm$6.25} & \multicolumn{2}{c}{41.88$\pm$6.88} \\
& \textbf{DeCaf} & \multicolumn{2}{c}{\textbf{51.84$\pm$0.88}} & \multicolumn{2}{c}{\textbf{52.29$\pm$0.56}} & \multicolumn{2}{c}{\textbf{50.68$\pm$0.63}} \\

  \midrule
   \multirow{7}{*}{\rotatebox[origin=c]{90}{H2GCN}}&ERM&\multicolumn{2}{c}{49.38$\pm$3.44}&\multicolumn{2}{c}{31.72$\pm$1.60}&\multicolumn{2}{c}{66.59$\pm$1.86}\\
  &IRM&\multicolumn{2}{c}{51.17$\pm$8.82}&\multicolumn{2}{c}{33.57$\pm$4.78}&\multicolumn{2}{c}{62.04$\pm$1.45}\\
  &EERM&\multicolumn{2}{c}{49.04$\pm$3.54}&\multicolumn{2}{c}{30.04$\pm$1.72}&\multicolumn{2}{c}{65.67$\pm$1.62}\\
  &CIT&\multicolumn{2}{c}{54.33$\pm$3.90}&\multicolumn{2}{c}{30.28$\pm$3.96}&\multicolumn{2}{c}{64.78$\pm$2.64}\\
  &FLOOD&\multicolumn{2}{c}{47.18$\pm$4.97}&\multicolumn{2}{c}{29.56$\pm$1.13}&\multicolumn{2}{c}{64.65$\pm$0.79}\\
&StableGL&\multicolumn{2}{c}{39.99$\pm$2.97}&\multicolumn{2}{c}{37.44$\pm$5.21}&\multicolumn{2}{c}{63.06$\pm$4.10}\\
  &\model{}&\multicolumn{2}{c}{\textbf{55.92$\pm$5.20}}&\multicolumn{2}{c}{\textbf{44.96$\pm$1.78}}&\multicolumn{2}{c}{\textbf{67.02$\pm$1.12}}\\
  \bottomrule
\end{tabular}
\label{tab::appendix-results-synthetic}
\end{table}

\subsection{Real-word datasets}
\label{append:statistics-real-world}
We provide the statistics of single-graph datasets in Table \ref{tab:stat:single-graph}. \texttt{Cora}~\cite{cora}, \texttt{Citeseer}~\cite{citeseer}, and \texttt{Coauthor-CS}~\cite{coauthor-cs} are citation networks. \texttt{Amazon-photo}~\cite{amazon-photo} is a co-purchase network where nodes represent goods for sale on e-commerce websites.  \texttt{Squirrel}, \texttt{Roman-empire}, and \texttt{Tolokers} are heterophilous networks created by Platonov~\cite{platonov2024critical} et al. \texttt{Squirrel} is a Wikipedia network, \texttt{Roman-empire} is created based on the Roman Empire article from English Wikipedia, and \texttt{Tolokers} is created based on data from the Toloka crowdsourcing platform, where the nodes represent tolokers (workers). 


\texttt{Facebook-100}~\cite{fb100} contains 100 social networks collected from universities in the United States. 
Each node represents a student and the goal is to predict the gender of each student. 
 We provide the statistics of sub-datasets we use in Table \ref{tab:stat:fb100}. 
 
 \texttt{OGB-elliptic}~\cite{elliplic} is a dynamic financial network dataset that contains in total of 43 graph snapshots from different time steps.
    Each node represents a Bitcoin transaction, and the goal is to detect illicit transactions. We group all 43 snapshots into 7 timeslots and provide statistics for each timeslot in Table \ref{tab:stat:OG}.

\subsection{Synthetic datasets}
\label{append::synthetic-datasets}
We create three synthetic graphs to simulate the situations where node features and neighborhood representation are dependent or independent of each other. 
For each graph, we randomly sample $n$ instances of $\mathbf{z}$ with 16 features from a multivariate normal distribution. We generate node features, labels, and adjacency matrices based on the data generation process in assumption \ref{assumption_1}. By posing different constraints on $\mathcal{M}_x$, $\mathcal{M}_y$, $\mathcal{M}_s$, and $\mathcal{M}_o$, we can control the dependence/ independence between node features and neighborhood representation, and their contributions to the labels.  Statistical details of these datasets are shown in Table~\ref{tab:stat:syn}.

\texttt{h-feat}: When $\mathcal{M}_x$ can only ``observe'' half of the elements of $\mathbf{z}$ and $\mathcal{M}_{s}, \mathcal{M}_{o}$ can fully observe $\mathbf{z}$, node features and neighborhood representation are correlated with each other, and each of them can reveal extra information about node label. To do so, we assign $\mathcal{M}_x$ as a $16\times8$ matrix with its first $8$ rows as an identical matrix, and the rest rows are all zeros, such that the second half elements of $\mathbf{z}$ do not participate in the construction of $\mathbf{x}$. We assign $\mathcal{M}_s$ and $\mathcal{M}_o$ to be the same size and value as $\mathcal{M}_y$, which is guaranteed to be a non-trivial transformation of $\mathbf{z}$. 

\texttt{qtr-feat}: When $\mathcal{M}_x$ ``observe'' quarter of the elements of $\mathbf{z}$ and $\mathcal{M}_{s}, \mathcal{M}_{o}$ can observe the rest three fourth, node features and neighborhood contributes to the prediction of node labels independently. We assign $\mathcal{M}_x$ as a $16\times4$ matrix with its first $4$ rows as an identical matrix, and the rest rows are all zeros. We assign $\mathcal{M}_s$ and $\mathcal{M}_o$ to be the same size and value as $\mathcal{M}_y$, except with the first $4$ rows replaced by all zeros. 

\texttt{full-feat}: We create another graph where node features and neighborhoods contribute to the prediction of node labels independently in an alternative way. First, we make $\mathcal{M}_x$ fully observe $\mathbf{z}$ by assigning it as a $16\times16$ identical matrix; then by assigning $\mathcal{M}_s$ and $\mathcal{M}_o$ as zeros, we create completely random edges. 
We slightly modify equation~\ref{eqn::y-generation} to be $\mathbf{y}_i = \mathcal{M}_{y}(\frac{1}{2}\mathbf{z}_i + \frac{1}{2}\mathbf{\bar{z}}_i)+ \mathbf{b}_{y}$, where $\bar{\mathbf{z}}_i$ is the mean $\mathbf{z}$ embedding of neighboring nodes of node $v_i$, such that the neighborhood representation can directly affect node labels.

\section{Complete results}
\label{append:complete-results}
We report the complete results on \texttt{facebook-100} when different sets of training graphs are used in Table \ref{tab::results-facebook-complete}.  With different training sets, the proposed \model{} achieves the best classification results over three different test sets compared with state-of-the-art OOD generalization methods.

\section{Limitations}\label{limit}

{
This paper focuses on homogeneous graphs with limited node and edge types. In the future, we plan to extend the method to heterogeneous graphs with more diverse node relations and neighborhood patterns. This extension will broaden the applications of our method to domains such as social networks, healthcare, and biological networks, where heterogeneity can provide rich information for making predictions.}

\section{Broader Impacts}\label{append:sec:bi}
{
\model{} improves the generalizability of the GNN, helping it learn a faithful mapping between inputs and outputs that captures true correlations. This is crucial for critical domains vulnerable to security issues, such as cybersecurity, finance, and healthcare. Learning a robust and generalizable model under potential distribution shifts is essential for these domains.}

\end{document}